\documentclass[conference]{IEEEtran}
\IEEEoverridecommandlockouts
\usepackage{cite}
\usepackage{amsmath,amssymb,amsfonts,amsthm}
\usepackage{algorithmic}
\usepackage{graphicx}
\usepackage{textcomp}
\usepackage{subfig}

\usepackage{color}
\usepackage[linesnumbered, ruled]{algorithm2e}
\newcommand{\ie}{\textit{i.e.}}
\newcommand{\eg}{\textit{e.g.}}
\newcommand{\etal}{\textit{et al. }}
\newtheorem{definition}{Definition}

\newtheorem{lemma}{Lemma}
\newtheorem{proposition}{Proposition}

\newtheorem{theorem}{Theorem}
\newtheorem{corollary}{Corollary}
\usepackage{booktabs}
\usepackage{adjustbox}

\usepackage{multirow}
\newcommand{\hau}[1]{{\color{blue}#1}}

\def\BibTeX{{\rm B\kern-.05em{\sc i\kern-.025em b}\kern-.08em
    T\kern-.1667em\lower.7ex\hbox{E}\kern-.125emX}}
\begin{document}

\title{CARE: Compatibility-Aware  Incentive Mechanisms for Federated Learning with Budgeted Requesters}










\author{\IEEEauthorblockN{Xiang Liu$^{1,2}$, Hau Chan$^{3}$, Minming Li$^{4}$, Xianlong Zeng$^{1}$, Chenchen Fu$^{1}$, Weiwei Wu$^{1,\ast}$
\thanks{This paper has been accepted by INFOCOM-2025.}
}
\IEEEauthorblockA{
$^1$Southeast University, $^2$The Chinese University of Hong Kong, \\
$^3$University of Nebraska-Lincoln, $^4$City University of Hong Kong\\
xiangliu@seu.edu.cn, hchan3@unl.edu, minming.li@cityu.edu.hk, \{xianlong\_zeng, chenchen\_fu, weiweiwu\}@seu.edu.cn}
}

\maketitle
\begin{abstract}
    Federated learning (FL) is a promising approach that allows requesters (\eg, servers) to obtain local training models from workers (\eg, clients). 
    Since workers are typically unwilling to provide training services/models freely and voluntarily, many incentive mechanisms in FL are designed to incentivize participation by offering monetary rewards from requesters.
    However, existing studies neglect two crucial aspects of real-world FL scenarios. 
    First, workers can possess inherent incompatibility characteristics (\eg, communication channels and data sources), which can lead to degradation of FL efficiency (\eg, low communication efficiency and poor model generalization).
    Second, the requesters are budgeted, which limits the amount of workers they can hire for their tasks. 
    In this paper, we investigate the scenario in FL where multiple budgeted requesters seek training services from incompatible workers with private training costs. 
    We consider two settings: the cooperative budget setting where requesters cooperate to pool their budgets to improve their overall utility and the non-cooperative budget setting where each requester optimizes their utility within their own budgets. 
    To address efficiency degradation caused by worker incompatibility,
    we develop novel compatibility-aware incentive mechanisms, CARE-CO and CARE-NO, for both settings to elicit true private costs and determine workers to hire for requesters and their rewards while satisfying requester budget constraints. 
    Our mechanisms guarantee individual rationality, truthfulness, budget feasibility, and approximation performance. 
    We conduct extensive experiments using real-world datasets to show that the proposed mechanisms significantly outperform existing baselines.
\end{abstract}

\section{Introduction}
Federated learning (FL) \cite{li2020review,kairouz2021advances} is a decentralized machine learning paradigm that enables collaborative model training across a group of workers (\eg, clients, mobile devices and data owners) without directly sharing or revealing workers' raw data openly. 
Recently, FL has gained significant attention and has been applied to various applications in domains such as edge computing \cite{wang2019adaptive}, healthcare \cite{xu2021federated}, and finance \cite{long2020federated}.



In FL, requesters (\eg, servers, and model owners) publish their training tasks and workers  participate in the training tasks by using their local data to train local models \cite{zhang2021survey}. 
It has been observed that workers are commonly unwilling to freely contribute to training due to the costs of using their own data and computational resources \cite{tu2022incentive}. 
In addition, workers' costs are naturally private and unknown to the platform.
Therefore, previous works \cite{zhan2020learning,zhang2021incentive,zhan2021survey} have designed (truthful) incentive mechanisms in FL  to elicit workers' true private costs, select workers to hire for requesters/tasks, and determine workers' rewards/payments for training and providing local models.
As the requesters often have limits on how much they can pay the workers, 
recent studies have focused on designing incentive mechanisms that ensure that the total payment to workers does not exceed the requester’s budget \cite{zhang2021incentive,fan2020hybrid,sun2021pain,zhang2022online}. 





However, existing incentive mechanism design studies in FL  do not consider two crucial aspects: (1) \textit{multiple budgeted requesters} and (2) the \textit{compatibility of workers}  that are prevalent in real-world FL.
Regarding multiple budgeted requesters, existing settings primarily concentrate on designing incentive mechanisms for a single budgeted requester \cite{roy2021distributed,xu2021bandwidth,mai2022automatic}. 
However, these approaches do not provide reasonable incentive mechanisms for multiple requesters who simultaneously seek to hire workers for their respective tasks, particularly in natural situations where each requester has limited ability to hire workers. 
Regarding the compatibility of workers,  existing incentive mechanisms in FL ignore the fact that workers can be categorized into different groups based on their inherent incompatibility characteristics. Those within the same group are incompatible in jointly performing tasks.

In general, compatibility issues are common in practical FL. 
For instance, in the FL  that utilizes congested wireless communication channels  (\eg, the wireless spectrum) to update global model parameters,  workers (\eg, mobile devices) can experience congestion or disconnections due to  bandwidth limitations \cite{jiao2020toward}.
To enhance the stability and efficiency of communication,  it is often advisable to restrict the number of workers using the same channel \cite{amiri2020federated}.
In such scenarios, workers are often grouped based on their specific communication channels, leading to incompatibilities between workers within the same group \cite{jiao2020toward}.
In addition, workers can also be grouped according to the similarity of their datasets and data sources (\eg, data collected from populations with varying demographics, ages or levels of education).
To improve the performance (\eg, generalization or robustness \cite{huang2024federated}) of the global model, each requester prefers a broader selection of workers from various data sources, especially within their budgets \cite{doan2001reconciling}. Consequently, workers with the same data source become incompatible when selected simultaneously.
Therefore, disregarding the compatibility of workers can lead to low efficiency in FL, such as prolonged communication times during the model update process and poor generalization of the trained global model.

\noindent
\textbf{Our Goal and Contributions.} 
Motivated by the above real-world scenarios,  we investigate the problem of designing  \textbf{C}ompatibility-\textbf{A}ware incentive mechanisms in fede\textbf{R}ated l\textbf{E}arning (CARE)  with multiple budgeted requesters and incompatible workers. 
Specifically, workers are classified into groups based on their inherent incompatibility characteristics (\eg, communication channels, and data sources).
In addition, there are compatibility constraints among workers, \ie, the number of workers assigned to the same requester/task from each group should not exceed a predefined threshold.
Our goal is to design incentive mechanisms under the CARE problem to elicit workers' true private costs, select workers to hire for requesters/tasks that optimize the overall reputation (\ie, a common objective in FL \cite{kang2019incentive,zhang2021incentive}), and determine payments to the selected workers subject to the compatibility constraints and requesters' budgets.
Moreover, the designed mechanisms should satisfy desirable properties, including individual rationality, truthfulness, budget feasibility, and approximation guarantees. 
We refer readers to Section \ref{sec:problem} for  justifications. 

Because of multiple budgeted requesters and incompatible workers, designing incentive mechanisms under the CARE problem faces three main new challenges compared to existing studies.  
\textit{1) Cost-effective worker selection}: It is efficient to prioritize workers with low-cost and high-reputation during the worker selection. However, despite their cost-effectiveness, these workers can violate compatibility constraints, making it difficult to find workers that satisfy both cost-effectiveness and compatibility.
\textit{2) Stronger strategic manipulation:} With multiple requesters, it is essential to adaptively match workers to requesters. However, this  also creates more opportunities for workers to engage in strategic manipulation, \eg, a worker may bid a false cost  to be matched with a different requester and thereby obtain higher utility. 
\textit{3) Unpredictable payments:} To satisfy requesters' budgets, we should evaluate both the reputation and the  payments  that each requester can obtain and must pay when selecting workers. However, this process is  intractable  because workers' payments remain uncertain until the final outcome of worker selection is determined, which is necessary to ensure truthfulness.
We consider the CARE problem under two realistic budget settings: \textit{(i) Cooperative budget setting:} Requesters collaborate by pooling their budgets (\eg, hospitals integrate healthcare resources such as public funds applied from the organization to train a disease recognition model  \cite{2010Using,xu2021federated}), enabling them to hire more workers and thereby enhance their overall utility (\eg, improving the model accuracy by collaboratively sharing and aggregating trained models between requesters \cite{qin2021mlmg,xu2021bandwidth}). \textit{(ii) Non-cooperative budget setting:} Each requester hires workers within their own budgets.
Our main  contributions are summarized as follows:

\begin{itemize}
\item To the best of our knowledge, we are the first to design compatibility-aware incentive mechanisms in FL that capture workers' inherent incompatibilities and requesters' limited hiring abilities, thereby preventing efficiency degradation and improving budget utilization.



\item We first propose CARE-CO mechanism  for cooperative budget setting. Particularly, CARE-CO transforms the selection of workers within the compatibility constraint into a Max-Flow problem, allowing us to explore different potential prices while simultaneously ensuring efficiency. 
We then propose CARE-NO mechanism for non-cooperative budget setting, which first divides all workers into multiple sets so that each set of workers have similar reputations. Additionally, it introduces a virtual-price based sub-mechanism, named  PEA, to address each worker set independently. Specially, PEA utilizes the concept of virtual prices to evaluate requesters’ ability to obtain reputation and determines the critical price that aligns with this ability, thereby ensuring both budget feasibility and truthfulness.
\item Our  mechanisms are proved to guarantee individual rationality, truthfulness, budget feasibility, and computational efficiency. Moreover,  our mechanisms achieve approximation guarantees  in comparison to the optimal solution that has prior knowledge of workers' private costs.


\item Finally, we conduct experiments on two commonly adopted datasets in FL, \ie, Fashion MNIST (FMNIST)  and CIFAR-10.  Evaluation results show that our mechanisms improve  overall reputation of selected workers by about 824\% and the global model accuracy by about 57\% on average compared to baselines.


\end{itemize}

This paper is structured as follows. Section \ref{related-work} reviews the related works.  The system model and the definition of the problem are given in Section \ref{system-model}. 
We propose    CARE-CO and    CARE-NO in Section \ref{ CARE-CO} and  \ref{ CARE-NO}, respectively.
Section \ref{experiment} presents the  experimental results.
Finally, the conclusion is given in Section \ref{conclusion}.

\section{Related Work}\label{related-work}
Recently, a large body of literature has investigated  incentive mechanisms in FL. 
We refer the reader to the comprehensive survey \cite{zhan2021survey,tu2022incentive}. 
In the following, we focus on discussing related works on (budgeted) reverse auction-based or procurement mechanisms in FL  and  general settings.


\textbf{Reverse Auction or Procurement Based Mechanisms for Federated Learning.}
Reverse auction or procurement based mechanisms have been extensively used in various FL scenarios, effectively guiding the requesters in selecting high-quality workers to participate in training tasks and maximize the objective such as the social welfare \cite{le2021incentive,jiao2020toward}, model accuracy \cite{zhang2021incentive} and  the requester's utility \cite{zeng2020fmore}.  
While previous works have overlooked the budget constraint of the requester, Fan \etal \cite{fan2020hybrid} address this issue by considering the requester's budget and introducing a data quality-driven reverse auction.
This approach aims to maximize the requester's global model accuracy.
In a similar vein, Zhang \etal \cite{zhang2021incentive} design  a reputation calculation method  to indirectly capture the data quality of workers when designing the reverse auction-based incentive mechanism under the budget constraint.  
Building upon this, Zhang \etal \cite{zhang2022online} focus on the online setting of the aforementioned problems, where workers arrive in a specific order.
In addition, there are other works that examine budgeted incentive mechanisms in FL. These works explore various aspects such as differential private noises \cite{yang2023csra,ren2023differentially}, trustworthy data acquisition \cite{zheng2022fl}, and competition among federations \cite{tan2023reputation}. 
However,  all these works do not explicitly consider worker compatibility and typically assume a single requester in FL. 

\textbf{Budget Feasible Mechanisms.}
Regarding designing budgeted reverse-auction or procurement mechanisms in general settings, our work is also related to the settings of budget feasible mechanism design (see \eg, \cite{singer2010budget,chen2011approximability,gravin2020optimal,amanatidis2017budget}). 
In a typical setting of budget feasible mechanism design, a single buyer (\ie, requester) with a budget wants to procure services from sellers (\ie, workers) with private costs. 
However, all these settings assume a single buyer (requester) and disregard entirely the inherent compatibility issues among workers. 
We note that there are few works in budget feasible mechanisms consider multiple buyers \cite{wu2018budget,liu2022budget}. However, they ignore the  compatibility among workers.
Therefore, existing mechanisms do not  apply to our problem.

In contrast to the aforementioned previous works, our study focuses  on  incentive mechanisms for multiple budgeted requesters and incompatible workers  in FL.

\section{System Model and Problem Definition}\label{system-model}



\begin{figure}[t]
    \centering
    \includegraphics[width=0.35\textwidth]{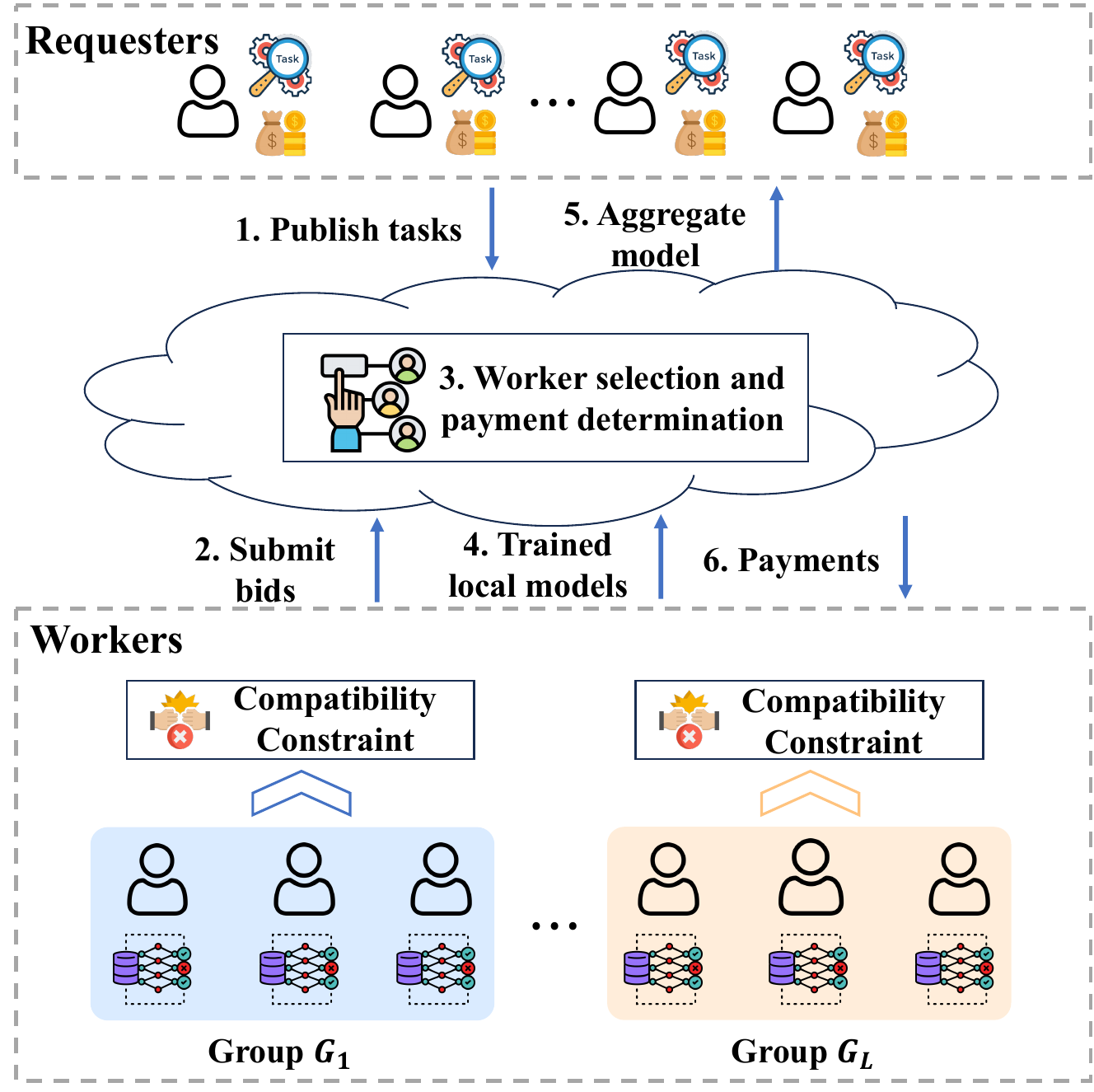}
    \caption{FL with incompatible workers and  budgeted requesters.}
    \label{fig:FL-service}
\end{figure}

\subsection{System Model}

As shown in Fig. \ref{fig:FL-service}, we consider the FL system consisting of  incompatible workers and multiple budgeted requesters. 
Requesters first publish their  training tasks and workers submit their bids (\eg, each worker's bid corresponds to the cost of training local models).
Subsequently, we implement the designed incentive mechanism, which takes into account workers' bids, reputations (\eg, calculated by historical task performance \cite{zhang2021incentive}), compatibility constraints and requesters' budgets, to assign workers to requesters.
The selected workers are provided with initial global models from their corresponding requesters and  train their local models, which they then upload to requesters for aggregation. This iterative process continues until requesters' models converge. 
Finally, requesters compensate workers with monetary rewards for their services.

\subsection{Problem Definition}\label{sec:problem}

Let  $S = \{s_1, s_2, \ldots, s_n\}$  denote the set of workers who can be recruited to conduct model training tasks locally. Each worker  $s_i$  has a private raw dataset   and a cost  $c_i$  to participate in a task, \eg, the consumption of energy and computational resource. 
There are  $m$  requesters  $A = \{a_1, a_2, \ldots, a_m\}$, and each requester $a_j$ holds a training task and seeking to hire a subset of workers  $S_j\subseteq S$  to carry out the training using their  datasets\footnote{We focus on scenarios where workers have the same type of datasets, \eg, image classification or NLP datasets, while requesters seek their own models using data from these workers.  Our  mechanisms can also be readily generalized to scenarios where workers have different  dataset types by applying them separately to groups of workers with the same dataset type.}. 
Each requester  $a_j$  has a budget  $B_j$  for hiring workers, and we use  $\mathcal{B} = \{B_1, B_2, \ldots, B_m\}$  to denote requesters’ budgets.
Suppose that workers' costs are no higher than requesters' budgets \cite{wu2018budget,liu2022budget}.
We consider two different budget settings: 
\textit{1) Cooperative budget setting:} Requesters are willing to collaborate by pooling their budgets, and denoted by $B=\sum_{j\le m}B_j$ the total budget of all requesters.
\textit{2) Non-cooperative budget setting:} Each requester hire workers within their individual budget.

\textbf{Compatibility Constraints:}
Due to the inherent incompatibility characteristics of workers, 
workers are categorized into $L$ different groups, \ie, $\mathcal{G}=\{G_1,G_2,\cdots,G_l,\cdots,G_L\}$.
We define $\tau_{lj}$ as the compatibility level of group $G_l$ for requester $a_j$, which indicates the maximum number of workers in $G_l$ can be selected for $a_j$ (\eg, $\tau_{lj}=1$ means that only one worker in $G_l$ can be selected  for $a_j$).
Then, we  define the \textit{compatibility constraint}  such that $|S_{j}\cap G_l|\le \tau_{lj}, \forall j\le m, l\le L$.

\textbf{Incomplete Information:}
We consider the incomplete information scenario where each worker's cost is \textit{private} (known by themselves). Thus, each worker can behave strategically to misreport their private cost to improve their utility (defined below). Let $b_i$ denote the cost reported by worker $s_i$, which may not equal (or potentially much higher than) the true cost $c_i$. Denote by $\mathbf{b}=\{b_1,b_2,...,b_n\}$ and $\mathbf{b}_{-i}$  the set of   workers' bids and the set  of  workers' bids except $b_i$, respectively.

\textbf{Incentive Mechanism:}
Let $S_{w}$ be the winner (or selected) worker set, \ie, $S_w=\cup_{j\le m}S_j$.
The incentive mechanism $\mathbb{M}=(X,P)$ consists of the allocation rule $X$ which maps the bid profile $\mathbf{b}$ to  $S_w$ and the payment rule $P$ which decides the payment for each winner. Let $x_{ij}\in \{0,1\}$ indicate whether worker $s_i$ is allocated to requester $a_j$, and $x_i:=\sum_{j\le m}x_{ij}\le 1$. 
In particular, $x_i=1$ implies $s_i\in S_w$.
Let $p_{ij}$ be the payment paid to worker $s_i$ from requester $a_j$ and $p_i:=\sum_{j\le m}p_{ij}$. 
Specially, if $x_{ij}=0$, then $p_{ij}=0$.
Given the mechanism $\mathbb{M}$,  the utility of worker $s_i$ is the difference between the true cost and the received payment, \ie, $u_s^i(\textbf{b},\mathbb{M})=p_i-x_i\cdot c_i$.

\textbf{The Objective:} 
\textit{Reputation} is commonly employed as a metric  to reflect the quality of workers in FL \cite{kang2020reliable}, \eg, their exerted efforts and the quality of their datasets.
The reputation of workers can be calculated through historical task performance \cite{kang2019incentive,zhang2021incentive}. 
Given that $v_{i}$ is the  reputation of worker $s_i$ and $\mathbf{v}=\{v_{1},v_{2},\cdots,v_{n}\}$ is the workers' reputation profile.
Thus, the objective  of the designed mechanism is to maximize the overall reputation   of selected workers $\max\sum_{i\le n,j\le m}x_{ij}v_i$, which is a standard objective in FL \cite{kang2019incentive,zhang2021incentive,zhang2022online}.
Moreover, the proposed mechanism  should satisfy the following  properties:


\textbf{(1) Individual Rationality:} The utility of each worker $s_i$ is non-negative, \ie, $u_s^i(\mathbf{b},\mathbb{M})\ge 0$ for any $\mathbf{b}$. \textbf{(2) Truthfulness:} Bidding the true cost, each worker gets the maximum utility, \ie, $\forall b_i,u_s^i(c_i,\mathbf{b}_{-i},\mathbb{M})\ge u_s^i(b_i,\mathbf{b}_{-i},\mathbb{M})$. \textbf{(3) Budget Feasibility:} 1) For the cooperative budget setting, the total payment of requesters does not exceed the total budget $B$, \ie, $\sum_{1\le i\le n}p_i\le B$, and 2) for the non-cooperative budget setting, the total payment of requester $a_j,\forall j\le m,$ does not exceed budget $B_j$, \ie, $\sum_{1\le i\le n}p_{ij}\le B_j$. 
\textbf{(4) Computational Efficiency:} The output of the mechanism can be computed in polynomial time. 
\textbf{(5) Approximation:} Let  $ALG(I)$ be the obtained reputation of  mechanism $\mathbb{M}$ with instance $I$. We compare  the output of the mechanism with the optimal  achievable overall reputation   when workers' costs are known in advance. A mechanism is  $\beta$-approximate if  {\small$\forall I,{ALG(I)} \ge  \frac{1}{\beta} {OPT(I)}$}.

\section{The Cooperative Budget Setting}\label{ CARE-CO}






In this section, we propose    CARE-CO for the cooperative budget setting.
To address the challenge posed  by the compatibility constraint, we first disregard the requesters' budgets and assess the optimal overall reputation for a specific worker set with bids not exceeding a particular price. 
Subsequently, by incrementally increasing the given price, we can identify a price that maximizes the overall reputation  while ensuring budget feasibility.
The detail of   CARE-CO is  in Algorithm \ref{alg: CARE-CO}.

\begin{algorithm}[t]
    \footnotesize
\caption{\textbf{CARE-CO}($B,\textbf{b},A,S,\mathcal{G}$)} 
\KwOut{$P,S_w$}
\label{alg: CARE-CO}
$P\leftarrow 0, S_w\leftarrow \emptyset$\;
Sort all workers as $\frac{b_1}{v_1}\le\frac{b_2}{v_2}\le \cdots \frac{b_n}{v_n}$\;
// \textbf{Worker selection}\;
\For{$1\le i\le n$}{
    Compute $\mathcal{M}(\mathcal{S}_i)$ by Eq. (\ref{eq:maximum-value-given-worker-set}) - (\ref{cons-s-i})\;
    If $\frac{b_i}{v_i}\cdot \mathcal{M}(\mathcal{S}_i) \le B$, $i\leftarrow i+1$; otherwise, break\;
}
$k\leftarrow i-1$\;

Decide the allocation $X$ by the solution of $\mathcal{M}(\mathcal{S}_k)$\;

Selected workers perform training tasks for requesters\;

// \textbf{Payment scheme}\;
\For{$i\le n, j \le m$}{
    $p_{ij}=v_i\cdot x_{ij} \min\{\frac{b_{k+1}}{v_{k+1}},\frac{B}{\mathcal{M}(\mathcal{S}_k)}\}$\;
} 
\end{algorithm}

As a crucial component of CARE-CO, we first introduce the method to calculate the maximum overall reputation of selected workers under the compatibility constraint, ignoring costs and budgets. We sort workers in non-decreasing order of their bids relative to their reputations, \ie, $\frac{b_1}{v_1} \le \frac{b_2}{v_2} \le \cdots \le \frac{b_n}{v_n}$. Let $\mathcal{S}_i = \{s_1, s_2, \ldots, s_i\}$ be the worker set containing the first $i$ workers. Next, we introduce the optimal overall reputation computation problem (ORP) for the  worker set $\mathcal{S}_i$.

\begin{definition}[Sub-problem ORP]
Given a worker set $\mathcal{S}_i$, then 
\begin{small}
    \begin{gather}
       \textbf{ORP}(\mathcal{S}_i)\textbf{:} \quad \max \sum_{t\le i,j\le m}v_tx_{tj}\label{eq:maximum-value-given-worker-set}\\
        s.t.,  \sum_{s_t\in G_{l}}x_{tj}\le \tau_{lj}, \forall j\le m,l\le L;\label{cons-hetero}\\
        \sum_{j\le m}x_{tj}\le 1,\forall t\le i; \label{cons-one-worker}\\
         x_{tj}\in \{0,1\},\forall t\le i,j\le m;\label{cons-s-i}
    \end{gather}  
\end{small}%
where (\ref{cons-hetero}) indicates compatibility constraints and (\ref{cons-one-worker}) means that each worker can only be assigned to at most  one requester\footnote{Note that all integer programs introduced in this paper  can be solved in polynomial time by converting it to the Max-Flow problem.}.
Let $\mathcal{M}(\mathcal{S}_i)$ denote the optimal  value  of  ORP$(\mathcal{S}_i)$.
\end{definition}

Then, we are ready to introduce CARE-CO. 
We start from the first worker and find the \textit{key worker} $s_k$ such that $\frac{b_k}{v_k}\cdot \mathcal{M}(\mathcal{S}_k)\le B$ and $\frac{b_{k+1}}{v_{k+1}}\cdot \mathcal{M}(\mathcal{S}_{k+1})> B$. 
In the optimal solution of ORP$(\mathcal{S}_k)$, if $x_i=\sum_{j\le m}x_{ij}=1$, then $s_i\in S_w$. 
The payment of each worker is $p_{ij}=v_i\cdot x_{ij} \min\{\frac{b_{k+1}}{v_{k+1}},\frac{B}{\mathcal{M}(\mathcal{S}_k)}\}$.

Next, we prove the theoretical guarantees of CARE-CO.

\begin{theorem}\label{theorem:property-CARE-CO}
       CARE-CO guarantees individual rationality, truthfulness, budget feasibility and computational efficiency, and achieves a $(2+\frac{v_{max}}{v_{min}})$-approximation where $v_{max}:=\max_{i\le n}v_i$ and $v_{min}:=\min_{i\le n}v_i$.
\end{theorem}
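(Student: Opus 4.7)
My plan is to verify each of the five claimed properties of CARE-CO in sequence, beginning with the structural guarantees that follow directly from the payment formula, then invoking Myerson's characterization for truthfulness, and finally a charging argument for the approximation bound.

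The structural properties are the easiest. Individual rationality holds because every winner $s_i$ has rank $i \le k$, so the break condition $(b_k/v_k)\mathcal{M}(\mathcal{S}_k) \le B$ gives $b_i/v_i \le b_k/v_k \le B/\mathcal{M}(\mathcal{S}_k)$, and by the sort order $b_i/v_i \le b_{k+1}/v_{k+1}$; hence $p_i = v_i \min\{b_{k+1}/v_{k+1},\, B/\mathcal{M}(\mathcal{S}_k)\} \ge b_i = c_i$ under truthful bidding. Budget feasibility follows from the identity $\sum_{s_i \in S_w} p_i = \mathcal{M}(\mathcal{S}_k)\cdot \min\{b_{k+1}/v_{k+1},\, B/\mathcal{M}(\mathcal{S}_k)\} \le B$, since the selected workers' reputations sum to exactly $\mathcal{M}(\mathcal{S}_k)$. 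Computational efficiency is immediate: the main loop runs at most $n$ times, and each iteration solves an ORP instance, which is polynomial-time via the Max-Flow reduction noted in the footnote.

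For truthfulness I would appeal to Myerson's lemma, which reduces the task to verifying monotonicity together with the critical-bid payment. The key observation for monotonicity is that for every index $q$ at least the original rank $p$ of $s_i$, the top-$q$ set $\mathcal{S}_q$ is invariant under any downward deviation $b_i' < b_i$, since $s_i$ stays inside and the other ratios do not move. Because a winner satisfies $p \le k$, both $\mathcal{S}_k$ and $\mathcal{S}_{k+1}$ are unchanged, the rank-$k$ check still passes (the new $k$-th ratio is bounded by $b_k/v_k$) and the rank-$(k+1)$ check still fails; hence the break point stays at $k$ and $s_i$ is reselected by the same optimal ORP allocation under a consistent tie-breaking rule. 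For the critical-bid identification I would examine the two subcases of the minimum separately: bidding above $v_i b_{k+1}/v_{k+1}$ pushes $s_i$ past $s_{k+1}$ and drops it from the new top-$k$, while bidding above $v_i B/\mathcal{M}(\mathcal{S}_k)$ inflates the new $k$-th ratio and forces the break point below $s_i$'s new rank. I expect this two-case threshold analysis to be the main technical obstacle, as it requires simultaneously tracking how the ranking, the ORP value, and the break point all respond to a unilateral deviation.

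For the approximation bound, let $ALG = \mathcal{M}(\mathcal{S}_k)$ denote the algorithm's reputation and let $S^*$ denote the optimal winner set. I would partition $S^*$ into $S_L^* = S^* \cap \mathcal{S}_k$ and $S_H^* = S^* \setminus \mathcal{S}_k$. The low part is feasible for ORP$(\mathcal{S}_k)$, so $\sum_{s_i \in S_L^*} v_i \le ALG$. For the high part, every $s_i \in S_H^*$ has $c_i/v_i \ge b_{k+1}/v_{k+1}$, which together with $\sum_{s_i \in S^*} c_i \le B$ yields $\sum_{s_i \in S_H^*} v_i \le B v_{k+1}/b_{k+1}$; the violated break condition then gives $\sum_{s_i \in S_H^*} v_i < \mathcal{M}(\mathcal{S}_{k+1})$, and the elementary inequality $\mathcal{M}(\mathcal{S}_{k+1}) \le \mathcal{M}(\mathcal{S}_k) + v_{k+1} \le ALG + v_{max}$ combines to give $OPT \le 2\, ALG + v_{max}$. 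Since the assumption that every $c_i \le B$ forces $k \ge 1$ and hence $ALG \ge v_{min}$, we can absorb $v_{max} \le (v_{max}/v_{min})\, ALG$ to conclude $OPT \le (2 + v_{max}/v_{min})\, ALG$.
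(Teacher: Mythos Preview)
Your proposal is correct and follows essentially the same route as the paper: the same direct checks for individual rationality, budget feasibility, and computational efficiency; the same appeal to Myerson's monotonicity/threshold characterization for truthfulness with the two-subcase analysis on $\min\{b_{k+1}/v_{k+1},\,B/\mathcal{M}(\mathcal{S}_k)\}$; and the same partition of $S^*$ into $\mathcal{S}_k$ and its complement for the approximation bound, combined with $\mathcal{M}(\mathcal{S}_{k+1})\le \mathcal{M}(\mathcal{S}_k)+v_{k+1}$ and $ALG\ge v_{min}$. Your high-part argument (using $c_i/v_i\ge b_{k+1}/v_{k+1}$ together with the violated break condition to get $\sum_{S_H^*}v_i<\mathcal{M}(\mathcal{S}_{k+1})$) is in fact more explicit than the paper's sketch, which simply asserts that the optimal reputation from workers after $s_k$ is at most $\mathcal{M}(\mathcal{S}_{k+1})$.
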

\begin{proof}[Proof Sketch]
(1) Individual rationality: For each winner $s_i\in S_w$, we have $p_i=v_i\cdot \min\{\frac{b_{k+1}}{v_{k+1}},\frac{B}{\mathcal{M}(\mathcal{S}_k)}\}\ge v_i\cdot \frac{b_i}{v_i}$, which indicates that $s_i$' utility is $p_i-b_i\ge 0$.

(2) Budget feasibility: The total payment to the winner is $\min\{\frac{b_{k+1}}{v_{k+1}},\frac{B}{\mathcal{M}(\mathcal{S}_k)}\}\cdot \sum_{s_i\in S_w}v_i\le \frac{B}{\mathcal{M}(\mathcal{S}_k)}\cdot \mathcal{M}(\mathcal{S}_k)= B$. 

(3) Computational efficiency: The running time of    CARE-CO is dominated by  the loop in computing the optimal reputation ORP$(\mathcal{S}_i)$ in the given worker set   (line 4-7)
As ORP problem can be  converted to the Max-Flow problem,   the final total complexity of $O(MN(N+L)(ML+2N))$.

(4) Truthfulness: We leverage the famous Monotone Theorem \cite{myerson1981optimal} to prove truthfulness, which shows that truthful mechanisms satisfy monotonicity and workers are paid threshold payments. Monotonicity means that when the selected worker reports a lower cost, the worker remains selected. Threshold payments guarantee that if a worker reports a cost higher than the threshold payment, this worker will not be selected. 

\textit{i) Monotonicity:} For any worker $s_i\in S_w$,
if $s_i$ decreases their bid to ${b_i'} < {c_i}$, we prove that worker $s_i$ will still be selected. Thus,    CARE-CO satisfies monotonicity. \textit{ii) Threshold payments:}
According to the relationship between $\frac{B}{\mathcal{M}(\mathcal{S}_k)}$ and $\frac{b_{k+1}}{v_{k+1}}$ in the payment $p_i$, we consider two cases: $\frac{B}{\mathcal{M}(\mathcal{S}_k)}\le \frac{b_{k+1}}{v_{k+1}}$ and $\frac{b_{k+1}}{v_{k+1}}<\frac{B}{\mathcal{M}(\mathcal{S}_k)}$. Then, we prove that any winner bidding a cost higher than the threshold payment in both two cases will not obtain a higher utility. Therefore,    CARE-CO guarantees truthfulness.

(5) Approximation ratio: Let $ALG_g, OPT_g$ denote the procured reputation of    CARE-CO and the optimal solution, respectively. We divide workers into two groups: workers before $s_{k+1}$ and workers after $s_k$. For the workers before $s_{k+1}$, the optimal solution can  achieve at most $\mathcal{M}(\mathcal{S}_k)=ALG_g$ reputation with cost zero. For the workers after $s_{k}$, the optimal solution can obtain at most $\mathcal{M}(\mathcal{S}_{k+1})$ reputation under the budget constraint. Then, we prove that $\mathcal{M}({\mathcal{S}_{k+1}})-\mathcal{M}({\mathcal{S}_k})\le v_{k+1}$. Thus, we have $OPT_g\le \mathcal{M}({\mathcal{S}_k})+\mathcal{M}({\mathcal{S}_{k+1}})\le 2\mathcal{M}({\mathcal{S}_k})+v_{k+1}$, which implies  $\frac{OPT_g}{ALG_{g}}\le \frac{2\mathcal{M}({\mathcal{S}_k})+v_{k+1}}{\mathcal{M}({\mathcal{S}_k})}\le 2+\frac{v_{max}}{v_{min}}$.
\end{proof}
\section{The Non-cooperative Budget Setting}\label{ CARE-NO}


In this section, we further propose CARE-NO to address the non-cooperative budget setting. 
Two key questions arise:
(1)  How can we measure requesters'  employability (\ie, the reputation they can obtain from workers)  on  varying budgets to ensure budget feasibility?
(2) Given the varying employability of requesters, how can we efficiently assign workers to requesters  under compatibility constraints and determine appropriate payments while ensuring truthfulness.

To address these two critical questions, we  first introduce a virtual-price based sub-mechanism called PEA, which serves as a core component of     CARE-NO.   PEA treats all workers as having the same reputation.
Specially, PEA introduces a non-trivial concept of  \textit{virtual price}, which helps to understand each requester's employability. 
By Utilizing the virtual price and  the requester's employability, PEA employs an integer program to assign workers under the compatibility constraint and  identifies a critical price that ensures both efficiency and truthfulness. 
Subsequently, we introduce CARE-NO, which divides all workers into multiple sets, ensuring that each set of workers has similar reputations, and applies PEA to address each worker set separately.
Detailed explanations of PEA and CARE-NO are provided in Section \ref{sec:pea} and \ref{sec:general-mechanisms}, respectively.

\subsection{Design of  PEA}\label{sec:pea}


We first consider the scenario that  all workers  have the same reputations and propose   PEA (detailed in Algorithm \ref{alg:pricing})  to hire as many as possible number of workers for requesters. 

\paragraph{Virtual price set}
We first introduce the concept of  \textit{virtual price} which determines the maximum employable number of workers for requesters under budget constraints.
We sort all workers in the non-decreasing order of their bids, \ie, ${b_1}\le{b_2}\le \cdots \le {b_n}$,
and assign a weight $w_i=2^i$ to worker $s_i$. 
Let $W=\{w_1,...,w_n\}$ denote the weight profile of the workers.
For each requester $a_j$, the employable number of workers falls within the range $[1,n]$. Denote by $\frac{B_j}{t}$ the maximum price at which requester $a_j$ can hire $t$ workers. 
Then, we can use the set of prices $\{\frac{B_j}{t}\}_{t\le n}$ to differentiate the employability of the requester $a_j$.
We define \textit{the virtual price set} $R_b=\{\frac{B_j}{t}\}_{\forall j\le m,t\le n}$ to save these prices from all requesters.
Specially, we define $\mathcal{E}(r)=\sum_{j\le m}\lfloor \frac{B_j}{r}\rfloor$ as the  \textit{requesters' employability} under price $r \in R_b$.   Let $\mathcal{S}(r)=\{s_i|b_i\le r\}$  represent the  \textit{available worker set} with bids no higher than  $r$.

\paragraph{Optimal worker selection problem (OSP)} 
Utilizing virtual prices and  the requester's employability, we next define the problem of computing the maximum  number of selected workers  under compatibility constraints at a given price. 
\begin{definition}[Sub-problem  OSP]
    Given  $\forall r \in R_b$,  requesters' employability $\mathcal{E}(r)$ and the available worker  set $\mathcal{S}(r)$,  then
\begin{small}
    \begin{gather}
        \textbf{ OSP(r):} \quad \max \sum_{s_t\in \mathcal{S}(r),j\le m}x_{tj}\label{eq:maximum-value-given-worker-set-multi-requester}\\
         s.t.  \sum_{s_t\in G_{l}\cup \mathcal{S}(r)}x_{tj}\le \tau_{lj}, \forall j\le m,l\le L;\\
        \sum_{j\le m}x_{tj}\le 1,\forall s_t\in \mathcal{S}(r);\\
         x_{tj}\in \{0,1\},\forall s_t\in \mathcal{S}(r),j\le m;\\
         \sum_{s_t\in \mathcal{S}(r)}x_{tj}\le \lfloor \frac{B_j}{r}\rfloor,\forall j\le m; \label{cons-with-ability}
\end{gather}
\end{small}%
where (\ref{cons-with-ability}) indicates that the number of workers allocated to each requester cannot exceed their employment ability at price $r$.
Denote by $\mathcal{M}_f(r)$ the maximum  value  of  OSP$(r)$. Fig. \ref{fig:procurement-ability} illustrates the employability under price set $R_b$ and the corresponding values of  OSP$(r)$.


\end{definition}






\paragraph{Winner selection and payment scheme}
Given the solution of  OSP problem,  we can find the minimum price $r^*\in R_b$, namely the \textit{critical price}, such that the maximum number of allocated workers  equals requesters' employability, \ie, $r^*\in \arg\min_{r\in R_b}\{\mathcal{E}(r)=\mathcal{M}_f(r)\}$. 
We use $r_{\prec}^*$ and $r_{\succ}^*$ to denote the left and right adjacent  prices of $r^*$ in  $R_b$.
Suppose that 
$s_k$ is the last worker with a cost no higher than $r^*$.

\begin{algorithm}[t]
    \footnotesize
\caption{\textbf{PEA} ($\mathcal{B},\textbf{b},A,\mathcal{G}$)} 
\KwOut{$P,S_w$}
\label{alg:pricing}
$P\leftarrow 0, S_w\leftarrow \emptyset$\;






Sort all workers as $b_1\le b_2\le \cdots b_n$\;
$w_i\leftarrow 2^i,\forall i\le n$\;

Generate the virtual price $R_b$\;
\For{$r \in R_b$}{
    Compute $\mathcal{M}_f(r)$ by Eq. (\ref{eq:maximum-value-given-worker-set-multi-requester}) - (\ref{cons-with-ability})\;
}
Find $r^*\in \arg\min_{r\in\mathcal{R}_b}\{\mathcal{E}(r)=\mathcal{M}_f(r)\}$\;
Candidate worker set is $\mathcal{S}(r^*)$\;
// \textbf{Winner selection}\;
Compute the allocation by Eq. (\ref{eq:minimum-weight-given-worker-set-multi-requester}) - (\ref{0-1constaint}) with  worker set $\mathcal{S}(r^*)$\;
If $x_i=\sum_{j\le m}x_{ij}=1$, $s_i\in S_w$\;

Selected workers perform training tasks for requesters\;
// \textbf{Payment Scheme}\;
\For{$s_i\in S_w$}{
    \For{$b_l\ge b_i$}{
        Run PEA$(\mathcal{B},\mathbf{b}_{b_l}',S,G)$\;
        \If{$s_i$ is still selected as a winner}{
            $P_i\leftarrow P_i\cup b_{l+1}$\;
        }
    }
    $p_i=\min\{r^*,\max_{b\in P_i}\{b\}\}$\;
}
\end{algorithm}

\textbf{Winner Selection:} To ensure that any winner remains selected after bidding a lower cost (as shown in Lemma \ref{lemma:lowerweight-win}), which is crucial to guarantee the truthfulness of workers, we should choose a specific set of winners from worker set $\mathcal{S}(r^*)$.
Particularly, we use the following integer program to select winners with the minimum sum of weights  such that the total number of selected workers is still $\mathcal{M}_f(r^*)$, \ie, 

{\begin{small}
\begin{gather}\label{eq:minimum-weight-given-worker-set-multi-requester} 
    \min \sum_{i\le k,j\le m}w_ix_{ij}\\
     s.t.   \sum_{s_i\in G_{l}}x_{ij}\le \tau_{lj}, \forall j\le m,l\le L \\
    \sum_{j\le m}x_{ij}\le 1,\forall i\le n \\
     \sum_{i\le n}x_{ij}\le \lfloor \frac{B_j}{r^*}\rfloor,\forall j\le m \label{cons-with-ability-critical}\\
    \sum_{i\le k,j\le m}x_{ij}=\mathcal{M}_f(r^*) \label{cons-choose-number}\\
    x_{ij}\in \{0,1\},\forall i\le k,j\le m \label{0-1constaint}
\end{gather}
\end{small}}%
where (\ref{cons-with-ability-critical}) indicates that the number of workers selected for each requester cannot exceed their employment ability at the price $r^*$ and (\ref{cons-choose-number}) means that we  choose $\mathcal{M}_f(r^*)$ items.
Note that there is only one optimal solution to Eq. (\ref{eq:minimum-weight-given-worker-set-multi-requester}) since the weight of $s_i$ is $2^i$.
If $x_i=\sum_{j\le m}x_{ij}=1,\forall i\le k$, then $s_i\in S_w$.

\begin{figure}[t]
    \centering
    \includegraphics[width=0.33\textwidth]{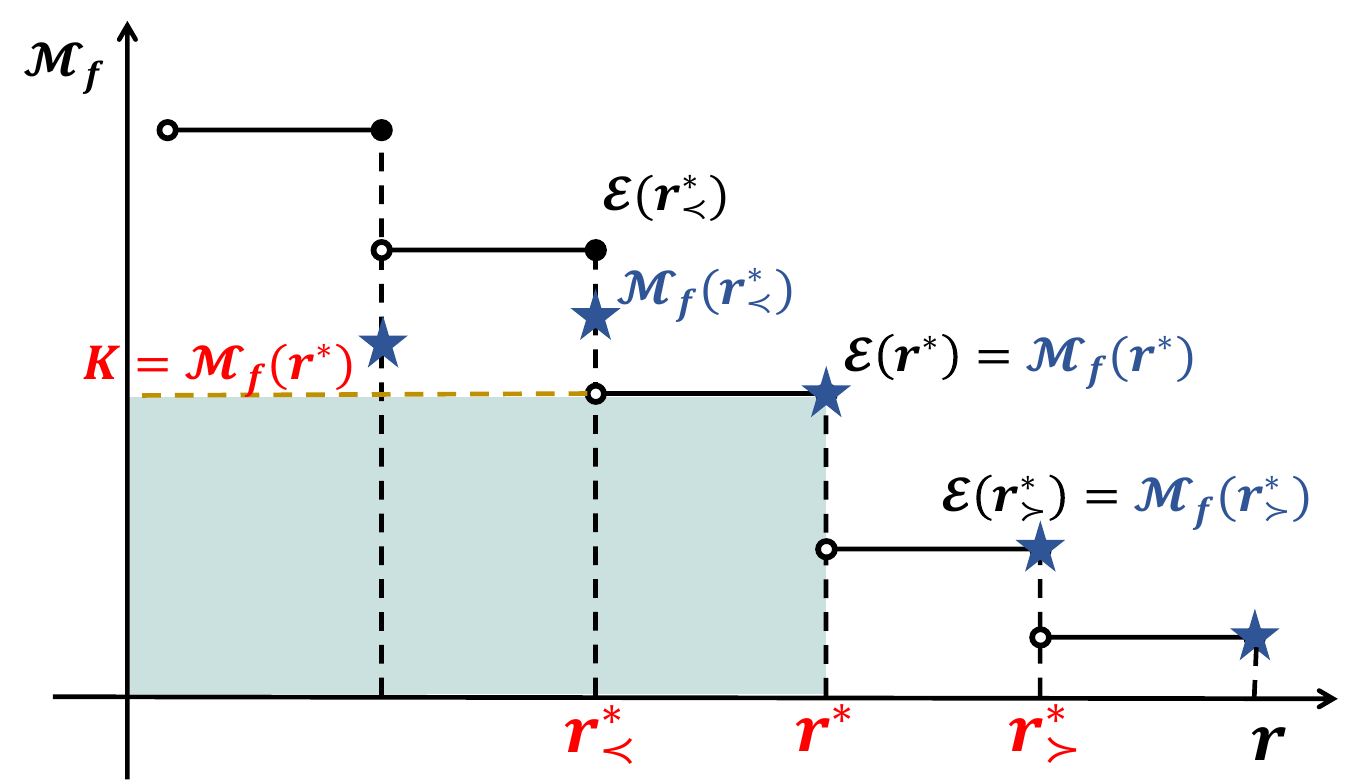}
    \caption{The maximum employable worker curves under the  price set $R_b$, where the black lines represent requesters' employability and  the blue star represents the  value of $\mathcal{M}_f(r)$.}
    \label{fig:procurement-ability}
\end{figure}

\textbf{Payment Scheme:}
The intuition behind the payment scheme is to determine the maximum bid that the winner $s_i$ can report while ensuring their status as a winner.
For every winner $s_i\in S_w$,  assume that $s_i$ bids to the $l$-th position in the worker order, \ie,  bidding a higher cost $b_i'=b_l> b_i,\forall 1\le l< n$, resulting in $s_i$ becoming the new $l$-th worker in the new worker order $\mathbf{b}_{b_l}'$, \ie, $\mathbf{b}_{b_l}'=\{ b_1,\cdots,b_{i-1},b_{i+1},\cdots, b_l,b_i',b_{l+1},\cdots,b_{n_h}\}$.
Then, we run \textbf{PEA}$(\mathcal{B},\mathbf{b}_{b_l}',S,\mathcal{G})$ with input $\mathbf{b}_{b_l}'$ for every $b_l\ge b_i$. If $s_i$ is still selected under the false cost $b_i'=b_l$, then add $b_{l+1}$ to the candidate bid set, denoted by $P_i$. Finally, the payment of $s_i\in S_w$ is $p_i=\min\{r^*,\max_{b\in P_i}\{b\}\}$. 

Next, we analyze the theoretical performance of PEA.

\begin{theorem}\label{theorem:bcs-m-property}
      PEA guarantees individual rationality, budget feasibility, and computational efficiency,  and achieves a $(2\alpha+1)$-approximation where $\alpha=\min\{m,\max_{l\le L,j\le m}\{\lceil|G_l|/\tau_{lj}\rceil\}$.
\end{theorem}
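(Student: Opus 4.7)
The plan is to verify the four claims in sequence, treating individual rationality, budget feasibility, and computational efficiency as direct consequences of the construction and reserving most effort for the $(2\alpha+1)$-approximation. Throughout, I rely on the fact (already used for CARE-CO) that OSP and the tie-breaking IP (\ref{eq:minimum-weight-given-worker-set-multi-requester})--(\ref{0-1constaint}) each reduce to a max-flow computation.

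For individual rationality, I would observe that any winner $s_i$ satisfies $b_i\le r^*$ by construction. Running PEA with $s_i$ bidding its own true cost, i.e.\ taking $b_l=b_i$, leaves $s_i$ a winner, so $b_{i+1}$ enters the candidate set $P_i$. Hence $p_i=\min\{r^*,\max_{b\in P_i}b\}\ge\min\{r^*,b_{i+1}\}\ge b_i$, yielding non-negative utility. Budget feasibility is then immediate because every winner is paid at most $r^*$ and constraint (\ref{cons-with-ability-critical}) caps the number of winners assigned to $a_j$ by $\lfloor B_j/r^*\rfloor$, so $a_j$'s total payout is at most $\lfloor B_j/r^*\rfloor\cdot r^*\le B_j$. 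Computational efficiency follows because $|R_b|=O(mn)$, the outer sweep over $R_b$ to compute $\mathcal{M}_f(r)$ thus costs $O(mn)$ max-flow solves, the tie-breaking selection is one more such solve, and the payment loop reruns PEA at most $n$ times per winner, keeping the total runtime polynomial.

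The substantive part is the approximation bound. Let $S^*$ denote OPT's winner set and partition it as $S^*=S_1^*\sqcup S_2^*$ with $S_1^*:=S^*\cap\mathcal{S}(r^*)$ and $S_2^*:=S^*\setminus\mathcal{S}(r^*)$. For $S_2^*$, every worker has cost strictly greater than $r^*$, so requester $a_j$ can afford at most $\lfloor B_j/r^*\rfloor$ such workers; summing over $j$ gives $|S_2^*|\le\mathcal{E}(r^*)=\mathcal{M}_f(r^*)$. For $S_1^*$, the workers satisfy the compatibility constraint but may violate the per-requester employment cap that PEA enforces, so I would bound $|S_1^*|$ by the ``compatibility-only'' optimum on $\mathcal{S}(r^*)$ and then show that this quantity is at most $\alpha\cdot\mathcal{M}_f(r^*)$. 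The intended decomposition splits any compatibility-feasible allocation into $\alpha$ sub-allocations each respecting the employment caps: when $\alpha=m$, one distributes a single requester's load across the other $m-1$; when $\alpha=\max_{l,j}\lceil|G_l|/\tau_{lj}\rceil$, one slices each overloaded group $G_l$ into $\alpha$ chunks of size at most $\tau_{lj}$ and assigns them via a round-robin matching. Combining $|S_1^*|\le\alpha\,\mathcal{M}_f(r^*)+\mathcal{M}_f(r^*)$ (the extra $\mathcal{M}_f(r^*)$ absorbs the boundary slack between $r_{\prec}^*$ and $r^*$) with the bound on $|S_2^*|$ would then yield $|S^*|\le(2\alpha+1)\mathcal{M}_f(r^*)$.

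The main obstacle is the $|S_1^*|$ step: turning the intuitive ``split into $\alpha$ cap-feasible pieces'' picture into a clean flow- or matching-decomposition argument that works uniformly in both regimes of $\alpha$ and correctly accounts for the boundary at $r_{\prec}^*$. I expect the cleanest route is to argue on the max-flow network used to compute $\mathcal{M}_f(r^*)$: since $\mathcal{M}_f(r^*)=\mathcal{E}(r^*)$ means the requester-side cut is tight, any additional compatibility-feasible units beyond $\mathcal{M}_f(r^*)$ must be absorbable by at most $\alpha$ parallel copies of the same flow, giving the factor $\alpha$ and completing the bound.
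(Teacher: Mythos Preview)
Your treatment of individual rationality, budget feasibility, and computational efficiency is sound and matches what one would expect: $p_i\ge\min\{r^*,b_{i+1}\}\ge b_i$, each requester pays at most $\lfloor B_j/r^*\rfloor\cdot r^*\le B_j$, and the polynomially many max-flow calls give the runtime bound. The paper omits these routine checks, so there is nothing to compare.

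The approximation argument, however, has a genuine gap. Your central claim is that the compatibility-only optimum on $\mathcal{S}(r^*)$ is at most $\alpha\cdot\mathcal{M}_f(r^*)$ (possibly plus one extra $\mathcal{M}_f(r^*)$), via a decomposition into $\alpha$ cap-feasible sub-allocations. This claim is false. Take $m=2$, a single group $G_1$ with $|G_1|=10$, $\tau_{11}=10$, $\tau_{12}=1$, $B_1=B_2=1$, and all ten workers with cost $\epsilon$. Then $\alpha=\min\{2,\max\{1,10\}\}=2$, and one checks that $r^*=1$ with $\mathcal{M}_f(r^*)=\mathcal{E}(r^*)=2$. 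But OPT assigns all ten workers to $a_1$ (feasible since $10\epsilon<1$ and $\tau_{11}=10$), so $|S_1^*|=10$ while your bound gives $(\alpha+1)\mathcal{M}_f(r^*)=6$. The decomposition fails because redistributing $a_1$'s load to $a_2$ runs into $\tau_{12}=1$: each cap-feasible sub-allocation can place at most $\lfloor B_1/r^*\rfloor+\tau_{12}=2$ workers, so you need five copies, not $\alpha=2$.

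Note also that even the arithmetic does not close: $|S_1^*|\le(\alpha+1)\mathcal{M}_f(r^*)$ together with $|S_2^*|\le\mathcal{M}_f(r^*)$ yields $(\alpha+2)$, not $(2\alpha+1)$. The example above is in fact tight for the theorem (ratio exactly $5=2\alpha+1$), and since $S_2^*=\emptyset$ there, any correct argument must allow $|S_1^*|$ alone to reach $(2\alpha+1)\mathcal{M}_f(r^*)$. So the partition by cost relative to $r^*$ with independent bounds on the two pieces cannot work; you need an argument that exploits OPT's budget constraint and the structure of $r^*$ jointly, not a purely combinatorial bound on the compatibility-only optimum.
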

Before proving truthfulness of PEA, we prove the following useful lemmas.

\begin{figure}[t]
    \centering
    \includegraphics[width=0.48\textwidth]{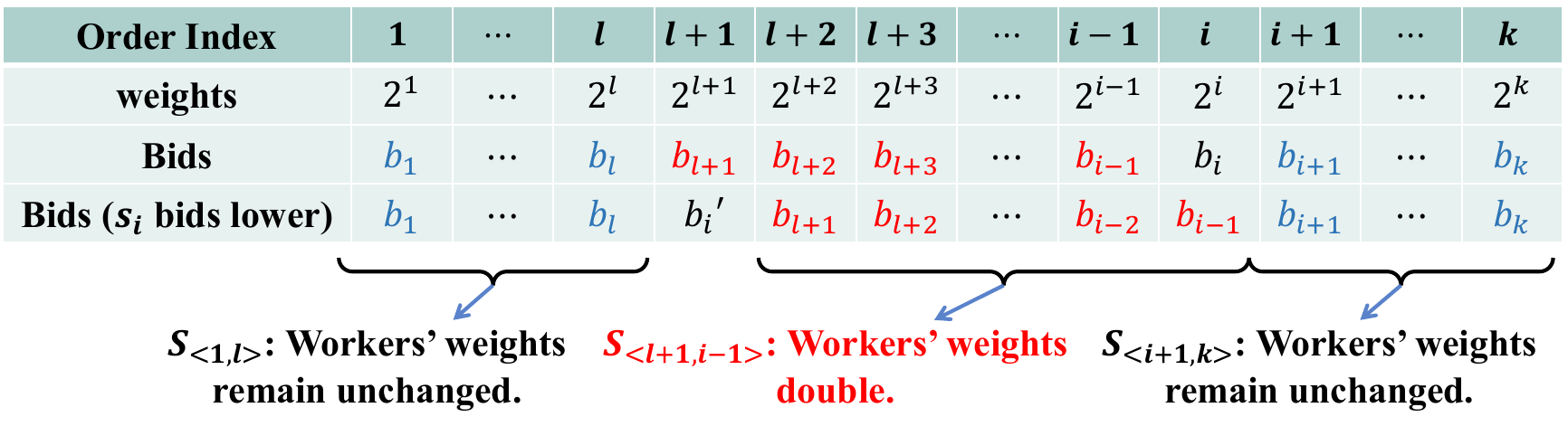}
    \caption{$k$ workers' bids and weights when $s_i$ bids a false cost.}
    \label{fig:lowerweight}
\end{figure}

\begin{lemma}\label{lemma:lowerweight-win}
    Assume that the winner $\forall s_i\in S_w$ bids a lower cost $b_i'<c_i\le r^*$. If the new critical price is still $r^*$, \ie, PEA chooses $\mathcal{E}(r^*)$ winners from worker set $\mathcal{S}(r^*)$, then $s_i$ is still a winner.
\end{lemma}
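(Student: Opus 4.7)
The plan is to proceed by contradiction, leveraging the fact that the weights $w_j=2^j$ make the minimum-weight feasible winner set unique. First I would describe precisely how positions change when $s_i$ lowers its bid to $b_i'<c_i$: since only $b_i$ decreases, $s_i$ shifts from old position $i$ to a new position $i^-\le i$, while the workers formerly at old positions $i^-,i^-{+}1,\ldots,i-1$ (call this set $Z$) each shift one slot to the right; every other worker keeps its position. Because $b_i'\le c_i\le r^*$ and no other bid changed, the candidate set $\mathcal{S}(r^*)$ and every constraint in (\ref{eq:minimum-weight-given-worker-set-multi-requester})--(\ref{0-1constaint}) are identical before and after, so only the positional weights of workers in $Z\cup\{s_i\}$ actually change.

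Next I would record the two elementary weight changes caused by this reshuffle: each $s\in Z$ at old position $p$ has its weight go from $2^p$ to $2^{p+1}$, contributing $+2^p$; and $s_i$'s weight goes from $2^i$ to $2^{i^-}$, contributing $-(2^i-2^{i^-})$. I would also record the key uniqueness observation that, since positional weights are distinct powers of two, any two different feasible winner sets of size $\mathcal{M}_f(r^*)$ must have distinct total weights (binary uniqueness of subset sums of $\{2^1,\ldots,2^n\}$), so the min-weight optimum is unique. Call the old unique optimum $X^*$ (which contains $s_i$ by the hypothesis $s_i\in S_w$) and the new unique optimum $Y^*$.

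Now suppose for contradiction that $s_i\notin Y^*$. Then $X^*\ne Y^*$, so the strict inequalities $w_{\mathrm{old}}(X^*)<w_{\mathrm{old}}(Y^*)$ and $w_{\mathrm{new}}(Y^*)<w_{\mathrm{new}}(X^*)$ both hold. Adding them and substituting the weight-change expressions above yields
\[
\sum_{s\in X^*\cap Z} 2^{\mathrm{pos}(s)} \;-\; \sum_{s\in Y^*\cap Z} 2^{\mathrm{pos}(s)} \;>\; 2^i-2^{i^-},
\]
where $\mathrm{pos}(s)$ denotes the old position. But $\sum_{s\in X^*\cap Z} 2^{\mathrm{pos}(s)}\le\sum_{p=i^-}^{i-1}2^p=2^i-2^{i^-}$ and $\sum_{s\in Y^*\cap Z} 2^{\mathrm{pos}(s)}\ge 0$, so the left-hand side is at most $2^i-2^{i^-}$, contradicting the strict inequality. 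Hence $s_i\in Y^*$.

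The main obstacle I expect is the bookkeeping of the simultaneous position shifts: I must verify carefully that \emph{only} workers in $Z\cup\{s_i\}$ actually move and that $\mathcal{S}(r^*)$ together with all constraints are preserved --- this is exactly where the hypothesis ``the new critical price is still $r^*$'' enters. Once that accounting is set up, the punchline is purely algebraic: the choice $w_j=2^j$ is tuned so that the geometric sum $\sum_{p=i^-}^{i-1}2^p=2^i-2^{i^-}$ exactly matches the weight saved by $s_i$ moving to slot $i^-$, leaving no room for any alternative $Y^*$ that drops $s_i$. A matroid-exchange proof via the max-flow reformulation could be an alternative route, but the powers-of-two argument above is self-contained and avoids characterizing the feasible polytope.
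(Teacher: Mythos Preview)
Your proposal is correct and follows essentially the same route as the paper's proof: both identify that only the workers in $Z\cup\{s_i\}$ change positional weight (those in $Z$ double, $s_i$'s drops from $2^i$ to $2^{i^-}$), assume $s_i$ is dropped from the new optimum, and derive a contradiction from the geometric-sum identity $\sum_{p=i^-}^{i-1}2^p=2^i-2^{i^-}$ together with the uniqueness of the min-weight optimum for powers-of-two weights. Your packaging via adding the two strict optimality inequalities is a bit more streamlined than the paper's chained inequalities, but the underlying argument is the same.
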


\begin{proof}
Note that $s_k$ is the last worker with the bid no higher than $r^*$.
When $s_i$ bids $b_i$, the order of the first $k$ workers' bids should be $ b_1\le b_2\le b_3\le \cdots b_k$, and the corresponding weight of the worker $s_t$ for every $t\le k$ is $w_t=2^t$.
Based on the value of $b_i'$, we consider two cases:

\textbf{Case 1:} If $b_{i-1}< b_i'\le b_{i+1},\forall 2\le i\le k$ or $b_i'\le b_{2}, i=1$, then $s_i$ remains the $i$-th worker and the weight is still $2^i$. According to the computation of Eq. (\ref{eq:minimum-weight-given-worker-set-multi-requester}) with the input worker set $\mathcal{S}(r^*)=\{s_1,s_2,...,s_k\}$, $s_i$ is still a winner.

\textbf{Case 2:} If $b_{l}< b_i'\le b_{l+1},\forall l\le i-2, i\le k$, then $s_i$ will be the $(l+1)$-th worker with a new weight $2^{l+1}<2^i$. 
    Let $w_t',\forall t\le k,$ and $S_w'$ be the new weight of worker $s_t$ and the new winner set after $s_i$ bids a lower bid, respectively. 
    As shown in Fig. \ref{fig:lowerweight}, we can divide the workers except $s_i$ into three groups: $S_{<1,l>}=\{s_1,...,s_l\},S_{<l+1,i-1>}=\{s_{l+1},...,s_{i-1}\},S_{<i+1,k>}=\{s_{i+1},...,s_{k}\}$.
    We can find that the weights of the workers in $S_{<1,l>}$ and $S_{<i+1,k>}$ remain unchanged, whereas the weights of the workers in $S_{<l+1,i-1>}$ double, \ie, $w_t'=2w_t, \forall l+1\le t\le i-1$.
Let $\mathbb{S}_w(r^*)$ denote the set that contains all possible sets of winners computed by Eq. (\ref{eq:maximum-value-given-worker-set-multi-requester}) with price $r^*$ and $S_w\in \mathbb{S}_w(r^*)$. Thus,  $\sum_{s_t\in S_w}w_t= \min_{S\in \mathbb{S}_w(r^*)} \sum_{s_t\in S}w_t$ since $S_w$ is the final winner set.

Assume that $s_i$ is not selected as a winner after bidding at a lower cost, \ie, $S_w'\cap \{s_i\}=\emptyset$. Then, we have $\sum_{s_t\in S_w'}w_t'\ge \sum_{s_t\in S_w'}w_t$ since $w_t'\ge w_t,\forall t\le k,t\neq i$. Thus,  
\begin{equation}\label{eq:small-weight}\footnotesize
    \sum_{s_t\in S_w'}w_t'\ge \sum_{s_t\in S_w'}w_t\ge \sum_{s_t\in S_w}w_t,
\end{equation}
where the last inequality holds because $S_w$ is the winning set with the smallest sum of weights.
Let $S_{l+1}^{i-1}$ and $<i_1,i_2,...,i_{|S_{l+1}^{i-1}|}>$ denote the set  of winners selected from set $S_{<l+1,i-1>}$ when $s_i$ bids the true cost and the sequence of indexes of these workers, respectively. Then, we have  
\begin{equation*}\footnotesize
    \begin{aligned}
        \sum_{s_t\in S_{l+1}^{i-1}\cup\{s_i\}}w_t'&=2^{l+1}+2(w_{i_1}+w_{i_2}+\cdots+w_{i_{|S_{l+1}^{i-1}|}})\\
        &\le 2^{l+1}+(w_{i_2}+\cdots+w_{i_{|S_{l+1}^{i-1}|}}+2^i)\\
        &\le  w_i+ w_{i_1}+\cdots+w_{i_{|S_{l+1}^{i-1}|}}=\sum_{s_t\in S_{l+1}^{i-1}\cup\{s_i\}}w_t.
    \end{aligned}
\end{equation*}
Consequently, we can have $\sum_{s_t\in S_w}w_t\ge \sum_{s_t\in S_w}w_t'$.
If $\sum_{s_t\in S_w}w_t= \sum_{s_t\in S_w}w_t'$, then $S_w$ remains the winner set and $s_i$ is a winner, leading to the contradiction that $s_i$ is not selected.
If $\sum_{s_t\in S_w}w_t>\sum_{s_t\in S_w}w_t'$,
by combining Eq. (\ref{eq:small-weight}), we can conclude that $\sum_{s_t\in S_w'}w_t' > \sum_{s_t\in S_w}w_t'$. This suggests that, when worker  $s_i$ bids a lower cost, we should choose set $S_w$ in order to obtain a smaller sum of weights compared with $S_w'$, which leads to the contradiction.
Therefore, combining the above cases, this lemma holds.
\end{proof}

\begin{lemma}\label{lemma:decreasing-function}
    $\forall r_1,r_2\in R_b,r^*\le r_1<r_2$, we have $\mathcal{M}_f(r)=\mathcal{E}(r), \forall r\in \{r_1,r_2\}$ and $\mathcal{M}_f(r_1)> \mathcal{M}_f(r_2)$.
\end{lemma}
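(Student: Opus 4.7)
The plan is to handle the two assertions separately, reducing the strict monotonicity claim to a statement about the employability function $\mathcal{E}$ alone.

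First I would establish that $\mathcal{M}_f(r)=\mathcal{E}(r)$ for every $r\in R_b$ with $r\ge r^*$. The upper bound $\mathcal{M}_f(r)\le \mathcal{E}(r)$ is immediate by summing constraint~(\ref{cons-with-ability}) over $j$: every feasible allocation satisfies $\sum_{i,j}x_{ij}\le \sum_j\lfloor B_j/r\rfloor = \mathcal{E}(r)$. For the matching lower bound, I would start from an optimal solution $x^*$ of $\textbf{OSP}(r^*)$. Since by definition of $r^*$ we have $\mathcal{M}_f(r^*)=\mathcal{E}(r^*)=\sum_j\lfloor B_j/r^*\rfloor$ and each per-requester constraint gives $\sum_i x^*_{ij}\le\lfloor B_j/r^*\rfloor$, equality must hold for every $j$ individually, so in $x^*$ requester $a_j$ receives exactly $\lfloor B_j/r^*\rfloor$ workers. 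Now, for any $r\ge r^*$ in $R_b$, I would build $x'$ by keeping, for each $a_j$, any $\lfloor B_j/r\rfloor\le \lfloor B_j/r^*\rfloor$ of the workers assigned to $a_j$ in $x^*$ and dropping the rest. Because $\mathcal{S}(r^*)\subseteq\mathcal{S}(r)$, the workers used are still available; because $x'\le x^*$ coordinate-wise, all compatibility and uniqueness constraints are preserved; and the total value is $\sum_j\lfloor B_j/r\rfloor=\mathcal{E}(r)$. Hence $\mathcal{M}_f(r)\ge\mathcal{E}(r)$ and equality follows.

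Given the first part, the strict inequality $\mathcal{M}_f(r_1)>\mathcal{M}_f(r_2)$ reduces to $\mathcal{E}(r_1)>\mathcal{E}(r_2)$ for any $r_1<r_2$ in $R_b$. For this I would analyze the integer step function $r\mapsto\lfloor B_j/r\rfloor$: each of its jumps occurs at a point of the form $B_j/t$, which lies in $R_b$, and every such jump is a strict decrease as $r$ grows. In particular, $\mathcal{E}$ is non-increasing on $R_b$. To upgrade non-increase to strict decrease, I would exploit that $r_1$ itself has the form $B_{j_0}/t_0\in R_b$ for some $j_0\le m$ and $t_0\le n$: then $B_{j_0}/r_1=t_0$ gives $\lfloor B_{j_0}/r_1\rfloor=t_0$, while $r_2>r_1$ forces $B_{j_0}/r_2<t_0$ and hence $\lfloor B_{j_0}/r_2\rfloor\le t_0-1$. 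The $j_0$-th term therefore strictly drops while no other term increases, so $\mathcal{E}(r_2)<\mathcal{E}(r_1)$. Combining this with part~(i) yields $\mathcal{M}_f(r_1)>\mathcal{M}_f(r_2)$.

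The main obstacle I expect is the construction of $x'$ from $x^*$ in the first part: one must argue cleanly that the per-requester quotas at price $r^*$ are simultaneously \emph{tight} (so that enough workers are already assigned to each requester to permit shrinking down to $\lfloor B_j/r\rfloor$), and this relies critically on the optimality identity $\mathcal{M}_f(r^*)=\mathcal{E}(r^*)$ rather than any deeper use of the minimality of $r^*$. Once that observation is in place, the rest is bookkeeping, and the strict-monotonicity step in part~(ii) becomes a clean consequence of where the discontinuities of $\lfloor B_j/\cdot\rfloor$ are located relative to $R_b$.
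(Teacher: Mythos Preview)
Your proposal is correct and follows essentially the same approach as the paper's proof: both argue that $\mathcal{E}$ is strictly decreasing on $R_b$, that $\mathcal{M}_f(r^*)=\mathcal{E}(r^*)$ by definition of the critical price, and that the optimal allocation at $r^*$ can be trimmed (using $\mathcal{S}(r^*)\subseteq\mathcal{S}(r)$) to a feasible allocation of value $\mathcal{E}(r)$ at any larger $r$. Your write-up is simply more explicit than the paper's---in particular, you spell out the per-requester tightness $\sum_i x^*_{ij}=\lfloor B_j/r^*\rfloor$ (which the paper leaves implicit in the phrase ``we can therefore assign $\mathcal{E}(r)$ workers'') and you give a concrete reason for the strict inequality $\mathcal{E}(r_1)>\mathcal{E}(r_2)$ by locating a coordinate $j_0$ where the floor drops, whereas the paper just invokes ``the definition of price set $R_b$.''
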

\begin{proof}
    Based on the definition of price set $R_b$, $\mathcal{E}(r_1)> \mathcal{E}(r_2)$. As $r^*$ is the critical price, we have $\mathcal{M}_f(r^*)=\mathcal{E}(r^*)$. For every price $r\in R_b$ and $r>r^*$, we have $\mathcal{S}(r^*)\subseteq \mathcal{S}(r)$, and we can therefore assign $\mathcal{E}(r)$ workers to requesters as the mechanism can assign $\mathcal{M}_f(r^*)$ workers to requesters where $\mathcal{M}_f(r^*)=\mathcal{E}(r^*)> \mathcal{E}(r)$. Thus, we have $\mathcal{M}_f(r_1)=\mathcal{E}(r_1)>\mathcal{M}_f(r_2)=\mathcal{E}(r_2)$.
\end{proof}

\begin{lemma}\label{lemma:adding-one-worker}
    Given $\forall r\in R_b$ and the corresponding set of workers $\mathcal{S}(r)$, if we remove any worker $s_t\in \mathcal{S}(r)$, then the new value $\mathcal{M}_f(r)'$ satisfies $\mathcal{M}_f(r)'\ge \mathcal{M}_f(r)-1$.
\end{lemma}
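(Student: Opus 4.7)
The plan is to prove the lemma by an exchange argument on the optimal solution of the sub-problem OSP$(r)$. Let $X^* = (x_{tj}^*)$ denote an optimal integer solution to OSP$(r)$ on the full available set $\mathcal{S}(r)$, so $\sum_{t,j} x_{tj}^* = \mathcal{M}_f(r)$. Given the removed worker $s_t$, I would split into two cases depending on whether $s_t$ participates in $X^*$.

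In the easy case, suppose $s_t$ is not matched by $X^*$, i.e.\ $\sum_{j\le m} x_{tj}^* = 0$. Then $X^*$ is also feasible for OSP$(r)$ on $\mathcal{S}(r)\setminus\{s_t\}$: it trivially satisfies the compatibility constraint, the at-most-one-requester constraint, and the employability cap $\lfloor B_j/r \rfloor$, because none of these constraints involves $s_t$ in a way that would be violated by shrinking the ground set. Hence $\mathcal{M}_f(r)'\ge \mathcal{M}_f(r)\ge \mathcal{M}_f(r)-1$.

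In the other case, suppose $s_t$ is matched by $X^*$ to some requester $a_{j^*}$, so $x_{tj^*}^*=1$. Define a new assignment $\tilde X$ by setting $\tilde x_{tj}=0$ for every $j$ and leaving all other entries equal to $X^*$. This drops exactly one matched pair, so $\sum_{i,j}\tilde x_{ij}=\mathcal{M}_f(r)-1$. Feasibility of $\tilde X$ on $\mathcal{S}(r)\setminus\{s_t\}$ is immediate: the group-compatibility count $|S_j\cap G_l|$ can only decrease, the at-most-one-requester constraint still holds for every remaining worker, and the employability count $\sum_{i}\tilde x_{ij^*}=\sum_{i}x_{ij^*}^*-1\le \lfloor B_{j^*}/r\rfloor$. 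Therefore $\tilde X$ is a feasible solution of OSP$(r)$ on $\mathcal{S}(r)\setminus\{s_t\}$ with objective value $\mathcal{M}_f(r)-1$, which yields $\mathcal{M}_f(r)'\ge \mathcal{M}_f(r)-1$.

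Combining both cases gives the bound in the statement. I do not expect any real obstacle here: the argument is purely a one-step exchange, and the constraints of OSP$(r)$ are all ``upper bounded'' inequalities that are preserved when an assignment is removed, so no careful reshuffling is required. The only subtle point to double-check is that the removal of $s_t$'s assignment is not forced to be compensated elsewhere — and it is not, since OSP$(r)$ is a maximization with no lower-bound constraints on the per-requester load.
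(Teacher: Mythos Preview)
Your proof is correct and follows essentially the same approach as the paper: take an optimal solution to OSP$(r)$, split on whether $s_t$ is selected, and in the selected case drop that single assignment to obtain a feasible solution of value $\mathcal{M}_f(r)-1$. Your version is slightly more explicit in checking each constraint of OSP$(r)$, but the argument is the same.
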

\begin{proof}
    Let $S_w(r)$ denote the selected workers of $\mathcal{M}_f(r)$, and we have $\mathcal{S}(r)'=\mathcal{S}(r)\setminus \{s_t\}$. If $s_t\in S_w(r)$, then we can still choose $S_w(r)\setminus \{s_t\}$ that satisfies the compatibility constraints in Eq. (\ref{eq:maximum-value-given-worker-set-multi-requester}), which implies $\mathcal{M}_f(r)'= |S_w(r)\setminus \{s_t\}|=\mathcal{M}_f(r)-1$. If $s_t\notin S_w(r)$, then $\mathcal{M}_f(r)'= \mathcal{M}_f(r)$.
\end{proof}



\begin{theorem}\label{theorem-truthfulness-homogeneous}
      PEA guarantees truthfulness.
\end{theorem}

\begin{proof}
We still leverage the famous Monotone Theorem \cite{myerson1981optimal} so that we can prove the mechanisms to be truthful. 
Monotone Theorem shows that truthful mechanisms satisfy monotonicity and workers are paid threshold payments. 

\textit{Monotonicity:} Based on the definition of the price set $R_b$,  the value of $\mathcal{E}(r),\forall r\in R_b$, remains unchanged if any worker reports a false cost. If worker $s_i$ reports a lower cost $b_i'\le r'\in R_b$ and $b_i'<c_i$ where $r'\le r^*$, we consider two cases: \textbf{(1)}
If the new critical price remains $r^*$, $s_i$ maintains its status as the winner according to Lemma \ref{lemma:lowerweight-win}. \textbf{(2)}
If the new critical price decreases to $r'\in R_b$ where $r'< r^*$: Let $\mathcal{S}(r')'$ denote the new set of workers with costs no higher than $r'$  after $s_i$ reports a false cost. Note that $\mathcal{S}(r')'=\mathcal{S}(r')\cup \{s_i\}$. We have $\mathcal{E}(r')=\mathcal{M}_f(r')'$ as $r'$ is the new critical price, and $\mathcal{M}_f(r')<\mathcal{E}(r')$ as the critical price is $r^*$ when $s_i$ does not report a false cost. Thus, $s_i$ will also be selected as a winner. Otherwise, we must have $\mathcal{M}_f(r')=\mathcal{M}_f(r')'$ due to $\mathcal{S}(r')'=\mathcal{S}(r')\cup \{s_i\}$.
Thus, $\mathcal{M}_f(r')'<\mathcal{E}(r')$ since $\mathcal{M}_f(r')<\mathcal{E}(r')$.

\textit{Threshold payments:} Recall that the payment of  $s_i$ is  $p_i=\min\{r^*,\max_{b\in P_i}\{b\}\}$. According to the relationship between $r^*$ and $\max_{b\in P_i}\{b\}$, we consider the following two cases:

\textbf{Case 1} $p_i=r^*$: When $s_i$ bids a higher cost $r^*<b_i'$, we consider two subcase: \textbf{Subcase 1} $\mathcal{M}_f(r^*)'<\mathcal{E}(r^*)$: we have $\mathcal{S}(r^*)'=\mathcal{S}(r^*)\setminus \{s_i\}$, and we should choose a higher price $r^*{'}>r^*$ as the new critical price.
As $r^*$ is the critical price, we have $\mathcal{M}_f(r^*)>\mathcal{M}_f(r^*{'})$ according to Lemma \ref{lemma:decreasing-function}. 
If $r^*{'}<b_i'$, PEA selects the winners from the workers with a cost at most $r^*{'}$, resulting in $s_i$ not being selected as the winner.
If $r^*{'}\ge b_i'$, then it follows that $\mathcal{S}(r^*{'})'=\mathcal{S}(r^*{'})$.
As proven in  Lemma {\ref{lemma:adding-one-worker}}, $\mathcal{M}_f(r^*)'\ge \mathcal{M}_f(r^*)-1$, thus we have $\mathcal{M}_f(r^*)'\ge \mathcal{M}_f(r^*)-1\ge  \mathcal{M}_f(r^*{'})=\mathcal{M}_f(r^*{'})'$ which means that we can choose $\mathcal{M}_f(r^*{'})'$ workers from the set $\mathcal{S}(r^*)'$. 
Let $w_{\mathcal{S}(r^*)'}$ represent the maximum weight among the workers in $\mathcal{S}(r^*)'$.
Since $s_i$ bids higher than $r^*$, the weight of $s_i$ will be at least $2w_{\mathcal{S}(r^*)'}$. Consequently, $s_i$ will not be selected since we can select a minimum of $\mathcal{M}_f(r^*{'})'$ workers from the set $\mathcal{S}(r^*)'$ whose total weight is smaller than $2w_{\mathcal{S}(r^*)'}$. \textbf{Subcase 2}
$\mathcal{M}_f(r^*)'=\mathcal{E}(r^*)$: the threshold price is still $r^*$, and we will choose winners from workers with costs no higher than $r^*$, then $s_i$ will not be selected.

\textbf{Case 2} $p_i=\max_{b\in P_i}\{b\}$: If $s_i$ bids a higher cost $p_i=\max_{b\in P_i}\{b\}<b_i'\le r^*$, according to the definition of $P_i$, the worker $s_i$ will not be selected as the winner. When $s_i$ bids a higher cost $b_i'>r^*$, by \textbf{Case 1}, worker $s_i$ will also not be selected as a winner.

    Therefore, by applying Myerson's theorem,   PEA guarantees truthfulness.
    \end{proof}

\subsection{Design of  CARE-NO}\label{sec:general-mechanisms}

Given the PEA sub-mechanism, we are ready to introduce CARE-NO.
Intuitively, we divide all workers into multiple sets so that each set of workers has similar reputations. 
This way, we can treat each set of workers as having the same reputation and call   PEA to  address each worker set.
In detail,
let $\rho_i=\frac{v_i}{v_{min}}\ge 1$ represent the virtual reputation of $s_i$, and $\rho_{max}:=\max_{i\le n}\rho_i$. We divide all workers in $S$ into $\gamma=\left \lceil \log_{\varepsilon} \rho_{max} \right \rceil$ sets  $\mathcal{D}=\{D_1,...,D_\gamma\}$  by their virtual reputations and $\varepsilon>1$ is an appropriate predetermined parameter, \ie,
\begin{equation}\label{eq:group-allocation}\footnotesize
    \mathbb{D}(s_i)= \left\{\begin{matrix}
 D_h, & \text{if} \ v_i\in (\varepsilon^{h-1},\varepsilon^h], 1\le h\le \gamma\\
 D_1, & \rho_i=1,
\end{matrix}\right.
\end{equation} 
where  $\mathbb{D}(s_i)$ refers to the set to which  worker $s_i$ is selected. 
Specifically, workers with virtual reputation $1$ are assigned to the set $D_1$.
Then, we view that each worker in the same set owns the same reputation and call   PEA to deal with the workers in the same set.
Denote by $S_w^h$ and $\mathcal{P}_h$ the  winners  and the corresponding payment returned by PEA on set $D_h$, \ie, $(\mathcal{P}_h,S_w^h)=$ \textbf{PEA}$(\mathcal{B},\textbf{b},D_h,\mathcal{G})$.
Finally, we sample one of the outputs from all sets with probability $\frac{1}{\gamma}$ as the final solution.

\begin{theorem}\label{BCS-truthfulness}
   CARE-NO guarantees individual rationality, truthfulness, budget feasibility, and computational efficiency,  and achieves $(2\alpha+1)\varepsilon\gamma$-approximation in expectation where $\alpha=\min\{m,\max_{l\le L,j\le m}\{\lceil|G_l|/\tau_{lj}\rceil\}$ and $\gamma=\lceil\log_{\varepsilon} \rho_{max}\rceil$.
\end{theorem}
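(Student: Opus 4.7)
The plan is to leverage the properties of PEA established in Theorem~\ref{theorem:bcs-m-property} and Theorem~\ref{theorem-truthfulness-homogeneous} on a per-set basis, and then combine the pieces through the uniform sampling over the $\gamma$ worker sets. Individual rationality, budget feasibility, and computational efficiency should transfer essentially for free: CARE-NO runs PEA on each set $D_h$ and outputs exactly one of the solutions $(\mathcal{P}_h, S_w^h)$, so whichever index $h$ is sampled, the delivered allocation/payment pair already satisfies IR and budget feasibility in the stronger per-requester sense because PEA does. Computational efficiency follows by observing that partitioning workers into the $\gamma = \lceil \log_\varepsilon \rho_{\max}\rceil$ sets is cheap and PEA runs in polynomial time on each set.

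For truthfulness, the key observation I would formalize first is that the set $\mathbb{D}(s_i)$ to which worker $s_i$ is assigned depends only on the reputation $v_i$, not on the reported bid $b_i$. Consequently, no bid manipulation can move a worker across sets, and from $s_i$'s perspective the mechanism collapses to running PEA on the fixed set $\mathbb{D}(s_i)$ (the winner selection in the other $\gamma-1$ sets is independent of $b_i$). Because PEA is truthful by Theorem~\ref{theorem-truthfulness-homogeneous}, bidding $c_i$ maximizes the conditional utility given that set $\mathbb{D}(s_i)$ is sampled, and bidding $c_i$ gives utility $0$ (which is optimal by IR) in every other set's realization. Taking expectation over the uniform $\tfrac{1}{\gamma}$ sampling, bidding truthfully maximizes $s_i$'s expected utility, which gives truthfulness in expectation.

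For the approximation, I would first bound the value of the optimal solution. Let $OPT$ be the optimal overall reputation when costs are known, and let $OPT_h$ be the contribution to $OPT$ from workers in set $D_h$, so that $OPT = \sum_{h \le \gamma} OPT_h$. Since every worker in $D_h$ has reputation in $(\varepsilon^{h-1} v_{\min}, \varepsilon^{h} v_{\min}]$, replacing each worker's reputation by the common upper bound $\varepsilon^{h} v_{\min}$ inflates $OPT_h$ by a factor at most $\varepsilon$, so an OPT-on-$D_h$ with unit reputations is at least $OPT_h / (\varepsilon^h v_{\min} \cdot \varepsilon)$ times $\varepsilon^h v_{\min}$... more cleanly, if $N_h^\ast$ denotes the number of workers that OPT uses from $D_h$, then $OPT_h \le \varepsilon^h v_{\min} \cdot N_h^\ast$. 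By Theorem~\ref{theorem:bcs-m-property}, PEA on $D_h$ (unit reputations) hires at least $N_h^\ast / (2\alpha+1)$ workers, each contributing at least $\varepsilon^{h-1} v_{\min}$ to the true objective. Hence $|S_w^h|\cdot \varepsilon^{h-1} v_{\min} \ge OPT_h / ((2\alpha+1)\varepsilon)$, i.e., the true reputation $ALG_h$ achieved on $D_h$ satisfies $ALG_h \ge OPT_h / ((2\alpha+1)\varepsilon)$.

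Finally, since CARE-NO returns $(\mathcal{P}_h, S_w^h)$ with probability $1/\gamma$, the expected reputation is
\begin{equation*}
\mathbb{E}[ALG] \;=\; \frac{1}{\gamma}\sum_{h\le \gamma} ALG_h \;\ge\; \frac{1}{\gamma(2\alpha+1)\varepsilon}\sum_{h\le \gamma} OPT_h \;=\; \frac{OPT}{(2\alpha+1)\varepsilon\gamma},
\end{equation*}
yielding the claimed $(2\alpha+1)\varepsilon\gamma$-approximation in expectation. The main obstacle I anticipate is the truthfulness argument: one has to argue carefully that (i) set membership is bid-independent, (ii) cross-set payments are not coupled through anything the manipulating worker controls, and (iii) expected utility is maximized because each conditional realization is individually maximized by truthful bidding; the approximation bookkeeping on $\varepsilon$ and the two reputation scales is otherwise routine.
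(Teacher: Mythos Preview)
Your proposal is correct and follows essentially the same route the paper takes: the economic properties (IR, budget feasibility, truthfulness, efficiency) are inherited from PEA because set membership $\mathbb{D}(s_i)$ depends only on reputation and not on the bid, and the approximation is obtained by decomposing $OPT$ across the $\gamma$ reputation buckets, applying PEA's $(2\alpha+1)$ guarantee to the worker \emph{count} in each bucket, and absorbing the reputation spread within a bucket by the factor $\varepsilon$. One minor remark: your truthfulness argument in fact yields \emph{universal} truthfulness (bidding $c_i$ is weakly dominant for every realization of the $\tfrac{1}{\gamma}$ coin), not merely truthfulness in expectation, since in realizations where $\mathbb{D}(s_i)$ is not sampled the worker's bid has no effect at all.
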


\section{Experiment}\label{experiment}

\subsection{Experimental Settings}

\subsubsection{Setup} To vary the quality of different workers and subsequently their reputation,
we consider the \textit{noise label datasets} \cite{li2017learning}: part of the worker data samples are incorrectly labeled, and the data accuracy rate and cost range for workers are presented in Table \ref{tab:Bid}. 
Before starting the experiment, the costs are randomly generated within their corresponding cost range. 
In addition,  we first conduct $10$ training tasks to calculate the reputations of workers using the method in \cite{zhang2021incentive}.
We randomly assign the budget of each requester within the range [40, 80]. Furthermore, \( \tau_{lj} \) is randomly assigned within the range \( [1, |G_l|] \), and we set \( \varepsilon = 10 \) for CARE-NO. A total of 120 workers are established in FL. 
To demonstrate the influence of the number of requesters $m$, we fix the number of groups at 10 and vary the number of requesters from 2 to 12 in increments of 2. Similarly, to evaluate the impact of the number of groups $L$, we fix the number of requesters at 5 and vary the number of groups from 4 to 24 in increments of 4. 
All workers are randomly assigned to groups.



\subsubsection{Datasets and Models} 
To validate the performance of the proposed  mechanisms, 
we consider the task on two commonly adopted datasets:  Fashion MNIST (FMNIST) and CIFAR-10. 
For FMNIST, we adopt a three-layer neural network \cite{zhang2021incentive}, while for CIFAR-10, we use a CNN with three convolutional layers, followed by a maximum-pooling layer and two fully connected layers \cite{li2021model}.
For each dataset, each worker is provided with a training set of size 2000, while the requesters have test and validation datasets of size 2000 each.
Individual data is randomly drawn from the corresponding dataset.



\subsubsection{Benchmarks} 
Because no prior work has considered our challenging settings, we compare the proposed mechanisms with the following two reasonable benchmarks. 
(1) \textbf{RRAFL}: The most relevant mechanism from \cite{zhang2021incentive}, which focuses only on a single requester. We extend it to handle multiple requesters and groups of workers by assuming a virtual sum of requester budgets and randomly assigning winners to requesters without compatibility violation.
(2)  \textbf{RanPri}: A simple pricing mechanism that sets a random price within the cost range for each worker. If the price is at least equal to the worker's cost, the worker is selected as a winner and assigned to a random requester with sufficient remaining budget, ensuring no compatibility violations.

\subsubsection{Metrics} 

We evaluate these mechanisms using the following two metrics: \textit{(1) Overall Reputation}:  $\sum_{i\le n}\sum_{j\le m}x_{ij}v_i$, which is the objective of our proposed mechanisms; \textit{(2) Average Global Accuracy}: the average global model accuracy of requesters, \ie, $\frac{1}{m}\sum_{j\le m}q_i$ 
where $q_i$ is the global model accuracy of requester $a_j$.

\begin{table}[t]
    \caption{Bid Ranges w.r.t.   Data Accuracy Rates.}
    \centering
    \begin{tabular}{@{}c|ccc@{}}
    \toprule
    \textbf{Data accuracy rate} & [0.4, 0.6) & [0.6, 0.8) & [0.8, 1.0{]} \\ \midrule
    \textbf{Bid range}          & {[}2, 4{]}   & {[}3, 5{]}   & {[}4, 6{]}     \\ \bottomrule
    \end{tabular}
    \label{tab:Bid}
    \end{table}

\begin{figure*}[t]
    \centering
    \subfloat[\#requesters for FMNIST]{
        \label{fig:requesters-FMNIST}
        \includegraphics[width=0.235\textwidth]{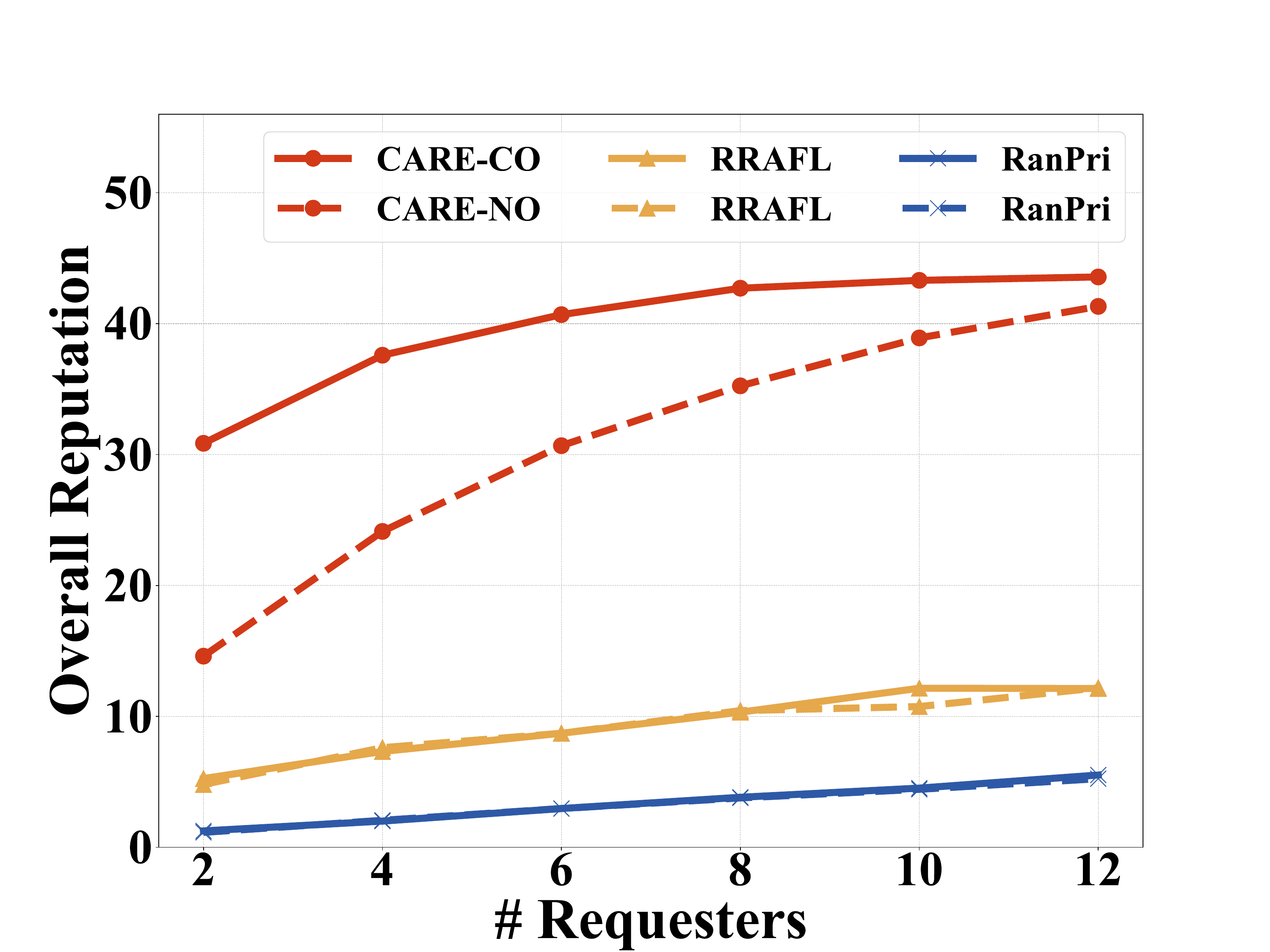}
    }
    \subfloat[\#groups for FMNIST]{
        \label{fig:groups-FMNIST}
        \includegraphics[width=0.235\textwidth]{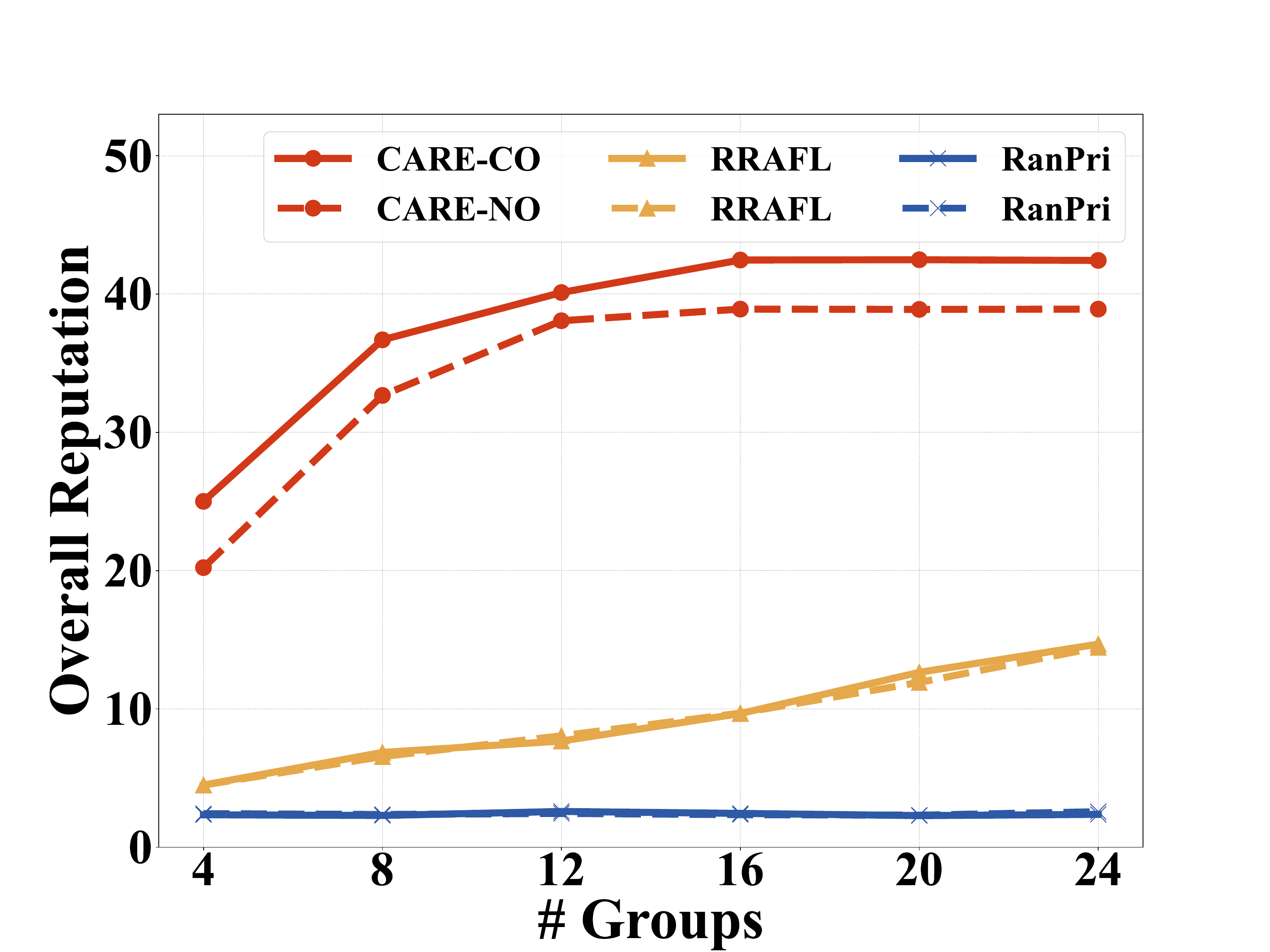}
    }
    \subfloat[\#requesters for CIFAR-10]{
        \label{fig:requesters-CIFAR-10}
        \includegraphics[width=0.235\textwidth]{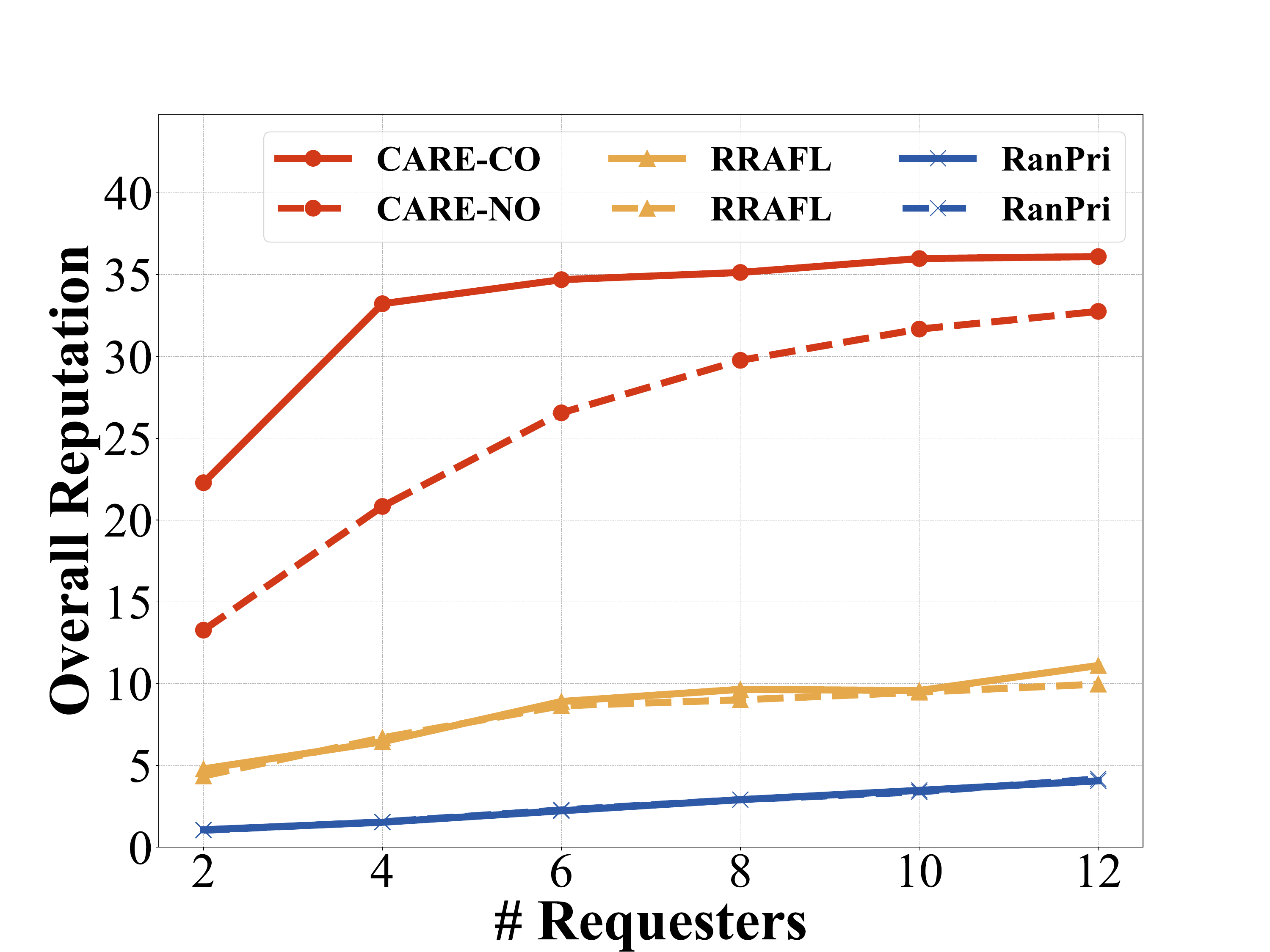}
    }
    \subfloat[\#groups for CIFAR-10]{
        \label{fig:groups-CIFAR-10}
        \includegraphics[width=0.235\textwidth]{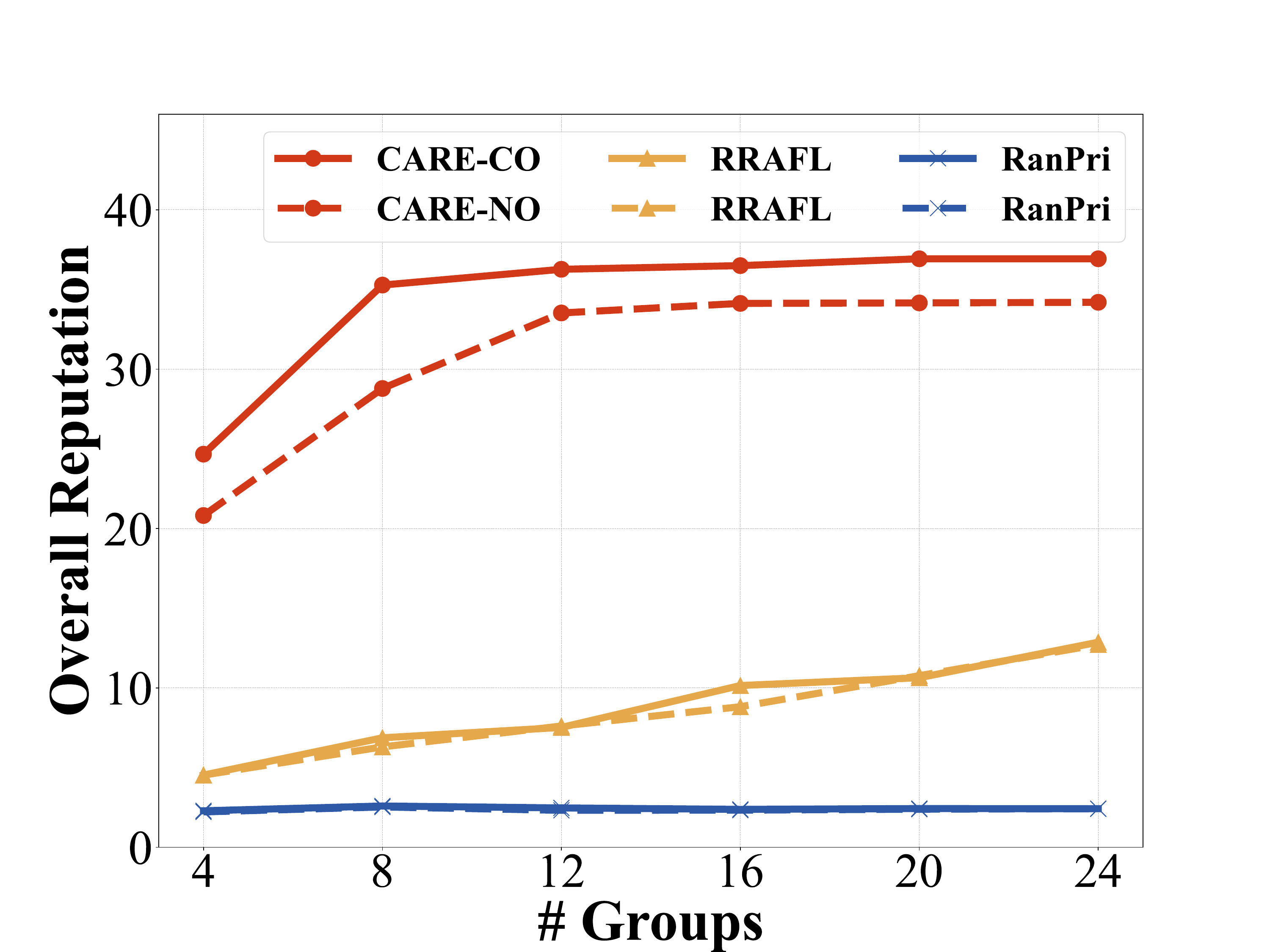}
    }
    \caption{The  overall reputation of selected workers, with solid lines representing the reputation in the cooperative budget setting and dashed lines representing the reputation in the non-cooperative budget setting.}
    \label{fig:overallreputationdata}
\end{figure*}

\subsection{Experimental Results}




\subsubsection{Overall Reputation of Selected Workers} \label{exp:reputation}

Fig. \ref{fig:overallreputationdata} presents the overall reputation of the proposed mechanisms for different numbers of requesters and groups. 
Specifically, Figs. \ref{fig:requesters-FMNIST}, \ref{fig:groups-FMNIST} and Figs. \ref{fig:requesters-CIFAR-10}, \ref{fig:groups-CIFAR-10} show the results on FMNIST and CIFAR-10, respectively.
\textbf{Firstly}, we observe that our proposed mechanisms, CARE-CO and CARE-NO, consistently outperform RRAFL and RanPri. 
In particular, the overall reputation of CARE-CO is significantly higher, with average improvements of $3$ and $15$ times compared to RRAFL and RanPri, respectively. Similarly, the overall reputation of CARE-NO has improved by factors of $2$ and $12$, respectively, compared to RRAFL and RanPri.
The superior performance of our proposed mechanisms can be attributed to their consideration of worker cost-efficiency (\ie, bids relative to reputations) and their ability to efficiently allocate the budget to select more efficient workers while accommodating worker compatibility. In contrast, RRAFL and RanPri struggle to allocate workers effectively in the presence of multiple budgets and worker compatibility.
\textbf{Secondly}, we consistently find that the overall reputation of CARE-CO exceeds that of CARE-NO. This aligns with the intuitive understanding that sharing the budget can facilitate the hiring of more high-quality workers, thereby enhancing the overall reputation attained by requesters.
\textbf{Lastly}, our proposed mechanisms demonstrate a stable increase in overall reputation as the number of groups increases. This can be attributed to the fact that a higher number of groups indicates a lower level of incompatibilities among workers, which in turn contributes to achieving a higher overall reputation.

\subsubsection{Average Global Accuracy of Requesters}
\label{exp:accuracy}

To validate the effectiveness of our proposed mechanisms, which not only exhibit high overall reputation efficiency but also improve model accuracy, we present the performance of average global accuracy for requesters as follows.
Table \ref{table:accuracy} shows the impact of the number of requesters and groups on the average global accuracy for both FMNIST and CIFAR-10.
\textbf{Firstly},  it is evident that the proposed mechanisms, CARE-CO and CARE-NO, consistently outperform RRAFL and RanPri. Specifically, the average accuracy of CARE-CO is 32.21\% and 84.49\% higher than that of RRAFL and RanPri on average, respectively. Similarly, the average accuracy of CARE-NO is 28.70\% and 83.89\% higher than that of RRAFL and RanPri on average, respectively.
\textbf{Secondly},  it is worth noting that the improvement in accuracy achieved by our proposed mechanisms is not as significant as the improvement in overall reputation. This is because achieving higher accuracy requires the participation of more high-quality workers. Nevertheless, both CARE-CO and CARE-NO are capable of achieving significantly better average accuracy due to their considerably higher reputation.
\textbf{Lastly}, we observe a slight decrease in average accuracy with an increasing number of requesters. This can be attributed to the fact that as the number of requesters increases, the number of workers allocated to each requester may slightly decrease, resulting in a decrease in average accuracy. Moreover, the overall average accuracy of our proposed mechanisms increases as the number of groups increases. The reason is that our proposed mechanisms can attain higher reputation in the presence of a lower level of incompatibilities among workers (\ie, a larger number of groups), which in turn contributes to achieving higher model accuracy for the requesters.

\begin{table*}[t]
    \caption{The Performance of the Average Global Accuracy}
    \label{table:accuracy}
    \begin{adjustbox}{width=\textwidth}
    \begin{tabular}{@{}l|c|ccc|ccc|c|ccc|ccc@{}}
    \toprule
    \multicolumn{1}{c|}{}                                                &         & \multicolumn{3}{c|}{Cooperative} & \multicolumn{3}{c|}{Non-cooperative} &         & \multicolumn{3}{c|}{Cooperative} & \multicolumn{3}{c}{Non-cooperative} \\ \midrule
    \multicolumn{1}{c|}{}                                                & \# Req. & CARE-CO          & RPAFL & RanPri & CARE-NO           & RPAFL   & RanPri  & \# Group & CARE-CO          & RPAFL & RanPri & CARE-NO           & RPAFL  & RanPri  \\ \midrule
    \multirow{6}{*}{\begin{tabular}[c]{@{}l@{}}FMN-\\ IST\end{tabular}}  & 2       & \textbf{0.797}  & 0.796 & 0.588  & \textbf{0.775}   & 0.759   & 0.653   & 4       & \textbf{0.784}  & 0.571 & 0.573  & \textbf{0.779}   & 0.664  & 0.570   \\
                                                                         & 4       & \textbf{0.795}  & 0.694 & 0.542  & \textbf{0.787}   & 0.742   & 0.551   & 8       & \textbf{0.781}  & 0.612 & 0.513  & \textbf{0.782}   & 0.599  & 0.494   \\
                                                                         & 6       & \textbf{0.786}  & 0.654 & 0.519  & \textbf{0.779}   & 0.653   & 0.531   & 12      & \textbf{0.787}  & 0.648 & 0.509  & \textbf{0.785}   & 0.621  & 0.507   \\
                                                                         & 8       & \textbf{0.790}  & 0.504 & 0.516  & \textbf{0.782}   & 0.657   & 0.488   & 16      & \textbf{0.789}  & 0.620  & 0.547  & \textbf{0.782}   & 0.627  & 0.493   \\
                                                                         & 10      & \textbf{0.782}  & 0.691 & 0.559  & \textbf{0.781}   & 0.621   & 0.501   & 20      & \textbf{0.798}  & 0.683 & 0.564  & \textbf{0.785}   & 0.638  & 0.553   \\
                                                                         & 12      & \textbf{0.789}  & 0.560 & 0.507  & \textbf{0.781}   & 0.568   & 0.507   & 24      & \textbf{0.797}  & 0.741 & 0.537  & \textbf{0.793}   & 0.670  & 0.535   \\ \midrule
    \multirow{6}{*}{\begin{tabular}[c]{@{}l@{}}CIFA\\ -R10\end{tabular}} & 2       & \textbf{0.546}  & 0.422 & 0.296  & \textbf{0.537}   & 0.477   & 0.243   & 4       & \textbf{0.549}  & 0.340 & 0.246  & \textbf{0.533}   & 0.340  & 0.249   \\
                                                                         & 4       & \textbf{0.553}  & 0.431 & 0.211  & \textbf{0.518}   & 0.427   & 0.235   & 8       & \textbf{0.548}  & 0.355 & 0.263  & \textbf{0.536}   & 0.392  & 0.258   \\
                                                                         & 6       & \textbf{0.543}  & 0.398 & 0.227  & \textbf{0.508}   & 0.375   & 0.219   & 12      & \textbf{0.559}  & 0.391 & 0.250  & \textbf{0.555}   & 0.357  & 0.251   \\
                                                                         & 8       & \textbf{0.539}  & 0.375 & 0.354  & \textbf{0.511}   & 0.368   & 0.249   & 16      & \textbf{0.555}  & 0.378 & 0.221  & \textbf{0.558}   & 0.444  & 0.240   \\
                                                                         & 10      & \textbf{0.533}  & 0.373 & 0.233  & \textbf{0.538}   & 0.364   & 0.237   & 20      & \textbf{0.551}  & 0.460 & 0.239  & \textbf{0.553}   & 0.421  & 0.276   \\
                                                                         & 12      & \textbf{0.523}  & 0.316 & 0.232  & \textbf{0.514}   & 0.341   & 0.238   & 24      & \textbf{0.556}  & 0.473 & 0.233  & \textbf{0.559}   & 0.444  & 0.234   \\ \bottomrule
    \end{tabular}
\end{adjustbox}
    \end{table*}

\begin{figure}
    \centering
    \includegraphics[width=0.48\textwidth]{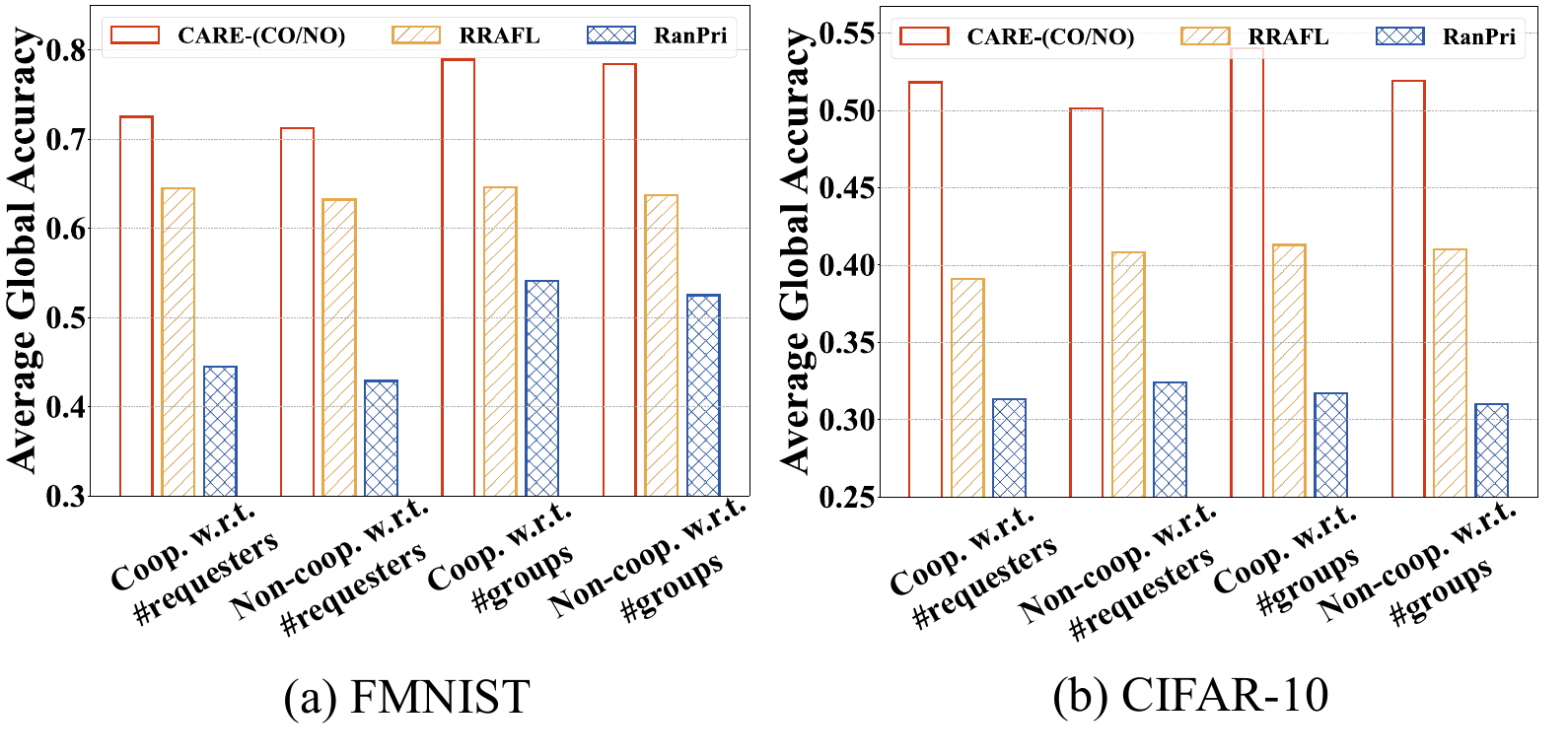}
    \caption{The average global accuracy of the proposed mechanisms under non-IID label distribution datasets.}
    \label{fig:non-iid-accuracy}
\end{figure}

\subsubsection{Accuracy on Non-IID Label Distribution} 
In the previous experiments,  we have  considered the performance of the proposed mechanisms in noisy label datasets where noise was added to create incorrect samples. In order to further validate the robustness and efficiency of our proposed mechanisms, we also compare the average global model accuracy with baselines on \textit{non-IID label distribution} datasets \cite{wang2020optimizing}, \ie, the sample labels in the dataset of the workers are non-uniformly distributed.
Fig. \ref{fig:non-iid-accuracy} illustrates the average accuracies of the proposed mechanisms in the FMNIST and CIFAR-10 datasets. 
We observe that the average global model accuracies of the proposed mechanisms decrease by only 4.25\% and 3.75\% on average compared to the accuracy obtained in Section \ref{exp:accuracy} for the FMNIST and CIFAR-10 datasets, respectively. This further validates the robustness of our proposed mechanisms on non-IID label distribution datasets.
Furthermore, our proposed mechanisms continue to significantly outperform the baseline mechanisms in non-IID label distribution datasets. Specifically, CARE-CO shows an improvement of 24.01\% and 61.13\% compared to RRAFL and RanPri, respectively, in terms of average accuracy. Similarly, CARE-NO demonstrates an improvement of 21.28\% and 59.33\% compared to RRAFL and RanPri, respectively.




\section{Conclusion}\label{conclusion}
In this paper, we consider compatibility-aware incentive mechanisms for incompatible workers in FL, where multiple requesters with budgets want to procure training services from groups of workers. 
For the cooperative budget setting, we propose CARE-CO, which leverages the Max-Flow solution to find feasible allocations under the compatibility constraint. 
Then, we propose CARE-NO for non-cooperative
budget setting, which divides all workers into multiple sets and introduces a sub-mechanism PEA to address each worker set separately. 
In particular, PEA uses virtual prices to evaluate requesters’ ability to obtain reputation and determines the critical price that aligns with their ability to ensure both budget feasibility and truthfulness.  
The proposed mechanisms can ensure individual rationality, budget feasibility, truthfulness, and approximation guarantee.
Experimental results in real-world datasets, FMNIST and CIFAR-10, validate that our proposed mechanisms significantly outperform baseline mechanisms in terms of the overall reputation of selected workers and the average global accuracy.

\bibliographystyle{IEEEtran}
\bibliography{reference}

\end{document}